\def\ie{{\it i.e.}}
\def\eg{{\it e.g.}}
\def\et{{\it et al.}}
\def\A{{\mathbf{A}}}
\def\dA{\Delta\mathbf{A}}
\def\tA{\widetilde{\mathbf{A}}}
\def\D{{\mathbf{D}}}
\def\tD{\widetilde{\mathbf{D}}}
\def\E{{\mathbf{E}}}
\def\H{{\mathbf{H}}}
\def\dH{\Delta\mathbf{H}}
\def\tH{\widetilde{\mathbf{H}}}
\def\h{{\mathbf{h}}}
\def\dh{\delta\mathbf{h}}
\def\I{{\mathbf{I}}}
\def\S{{\mathbf{S}}}
\def\W{{\mathbf{W}}}
\def\x{{\mathbf{x}}}
\def\X{{\mathbf{X}}}
\def\tX{\widetilde{\mathbf{X}}}
\def\Y{{\mathbf{Y}}}
\newcolumntype{Y}{>{\centering\arraybackslash}X}
\newtheorem{definition}{Definition}
\newtheorem{theorem}{Theorem}
\newenvironment{proof}{{\noindent\it Proof}\quad}{\hfill $\square$\par}
\begin{document}

\title{Self-Supervised Graph Representation Learning via Topology Transformations}

\author{Xiang~Gao,~\IEEEmembership{Student Member,~IEEE,}
        Wei~Hu,~\IEEEmembership{Senior~Member,~IEEE,}
        and~Guo-Jun~Qi,~\IEEEmembership{Fellow,~IEEE}
\IEEEcompsocitemizethanks{
\IEEEcompsocthanksitem X. Gao and W. Hu are with Wangxuan Institute of Computer Technology, Peking University, No. 128 Zhongguancun North Street, Beijing, China.\protect\\
E-mail: \{gyshgx868, forhuwei\}@pku.edu.cn
\IEEEcompsocthanksitem G.-J. Qi is with the Futurewei Seattle Cloud Lab, Seattle, WA.\protect\\
E-mail: guojunq@gmail.com
\IEEEcompsocthanksitem This work is supported by the National Key R\&D project of China under contract No. 2021YFF0901502, and National Natural Science Foundation of China under contract No. 61972009.
\IEEEcompsocthanksitem The corresponding author is W. Hu (forhuwei@pku.edu.cn).
The code is available at: \url{https://github.com/gyshgx868/topo-ter}
}
}

\markboth{IEEE TRANSACTIONS ON KNOWLEDGE AND DATA ENGINEERING}%
{Shell \MakeLowercase{\textit{et al.}}: Bare Demo of IEEEtran.cls for Computer Society Journals}

\IEEEtitleabstractindextext{%
\begin{abstract}
We present the Topology Transformation Equivariant Representation learning, a general paradigm of self-supervised learning for node representations of graph data to enable the wide applicability of Graph Convolutional Neural Networks (GCNNs). 
We formalize the proposed model from an information-theoretic perspective, by maximizing the mutual information between topology transformations and node representations before and after the transformations.
We derive that maximizing such mutual information can be relaxed to minimizing the cross entropy between the applied topology transformation and its estimation from node representations.
In particular, we seek to sample a subset of node pairs from the original graph and flip the edge connectivity between each pair to transform the graph topology.
Then, we self-train a representation encoder to learn node representations by reconstructing the topology transformations from the feature representations of the original and transformed graphs.
In experiments, we apply the proposed model to the downstream node classification, graph classification and link prediction tasks, and results show that the proposed method outperforms the state-of-the-art unsupervised approaches.
\end{abstract}

\begin{IEEEkeywords}
Self-supervised learning, graph representation learning, topology transformation, transformation equivariant representation.
\end{IEEEkeywords}}

\maketitle

\IEEEdisplaynontitleabstractindextext

\IEEEpeerreviewmaketitle

\IEEEraisesectionheading{\section{Introduction}\label{sec:intro}}

Graphs provide a natural and efficient representation for non-Euclidean data, such as brain networks, social networks, citation networks, and 3D point clouds. Graph Convolutional Neural Networks (GCNNs) \cite{bronstein2017geometric} have been proposed to generalize the CNNs to learn representations from non-Euclidean data, which has made significant advances in various applications such as node classification \cite{kipf2017semi,velivckovic2018graph,xu2019graph} and graph classification \cite{xu2018powerful}.
However, most existing GCNNs are trained in a supervised fashion, requiring a large amount of labeled data for network training.
This limits the applications of the GCNNs since it is often costly to collect adequately labeled data, especially on large-scale graphs.
Hence, self-supervised learning is required to learn graph feature representations by exploring the dependencies of unlabeled data in an unsupervised fashion, 
which enables the discovery of intrinsic graph structures and thus adapts to various downstream tasks.

Various attempts have been made to explore self-supervisory signals for representation learning.
The self-supervised learning framework requires only unlabeled data in order to design a \textit{pretext} learning task, where the target objective is optimized without any supervision \cite{kolesnikov2019revisiting}.
Self-supervised learning models can be categorized into three classes \cite{liu2020self}: generative, adversarial, and contrastive.
Generative models are often based on Auto-regressive models \cite{van2016pixel,oord16conditional,you2018graphrnn}, flow-based models \cite{dinh2014nice,dinh2016density}, and Auto-Encoding (AE) models \cite{ballard1987modular,kingma13auto,kipf2016variational} to generate or reconstruct data from latent representations.
Adversarial models extract feature representations in an unsupervised fashion by generating data from input noises via a pair of generator and discriminator \cite{goodfellow2014generative,wang2018graphgan}.
Contrastive models aim to train an encoder to be \textit{contrastive} between the representations of positive samples and negative samples \cite{hjelm2018learning,velickovic2019deep,peng2020graph,sun2020infograph}.

Recently, many approaches have sought to learn \textit{transformation equivariant representations (TERs)} to further improve the quality of unsupervised representation learning.
It assumes that the learned representations equivarying to transformations are able to encode the intrinsic structures of data such that the transformations can be reconstructed from the representations before and after transformations \cite{qi2019learning}.
Learning TERs traces back to Hinton's seminal work on learning transformation capsules \cite{hinton2011transforming}, and embodies a variety of methods developed for Euclidean data \cite{kivinen2011transformation,sohn2012learning,schmidt2012learning,skibbe2013spherical,gens2014deep,lenc2015understanding,dieleman2015rotation,dieleman2016exploiting,zhang2019aet,qi2019avt,wang2020transformation}.
Further, Gao \textit{et al.} \cite{gao2020graphter} extend transformation equivariant representation learning to non-Euclidean domain, which formalizes Graph Transformation Equivariant Representation (GraphTER) learning by auto-encoding node-wise transformations in an unsupervised fashion.
Nevertheless, only transformations on node features are explored, while the underlying graph may vary implicitly. The graph topology has not been fully explored yet, which however is crucial in graph representation learning. 

To this end, we propose a self-supervised Topology Transformation Equivariant Representation learning to infer expressive graph feature representations by estimating topology transformations.
As the topology of graph data is critical in graph-based machine learning, this motivates us to study the properties of graph topology including especially the topology transformation equivariance.
In particular, the topology of some graph data such as citation networks and social networks may transform over time, leading to the change of node features\footnote{For instance, with the development of research in different fields, more interdisciplinary citations will appear in citation networks; the personal relationships in social networks may change over time.}.
When the topology transforms, the node features transform equivariantly in the representation space.
Hence, instead of transforming node features as in the GraphTER, the proposed method studies the transformation equivariant representation learning by transforming the graph topology, \ie, adding or removing edges to perturb the graph structure.
Then the same input signals are attached to the resultant graph topologies, resulting in different graph representations.
This provides an insight into how the same input signals associated with different graph topologies would lead to equivariant representations, enhancing the fusion of node feature and graph topology in GCNNs. 

Formally, we formulate the proposed model from an information-theoretic perspective, aiming to maximize the mutual information between topology transformations and feature representations with respect to the original and transformed graphs. 
We derive that maximizing such mutual information can be relaxed to the cross entropy minimization between the applied topology transformations and the estimation from the learned representations of graph data under the topological transformations.


Specifically, given an input graph and its associated node features, we first sample a subset of node pairs from the graph and flip the edge connectivity between each pair at a perturbation rate, leading to a transformed graph with attached node features.
Then, we design a graph-convolutional auto-encoder architecture, where the encoder learns the node-wise representations over the original and transformed graphs respectively, and the decoder predicts the topology transformations of edge connectivity from both representations by minimizing the cross entropy between the applied and estimated transformations.
Experimental results demonstrate that the proposed method outperforms the state-of-the-art unsupervised models, and even achieves comparable results to the (semi-)supervised approaches in node classification, graph classification and link prediction tasks at times.

The proposed method distinguishes from our previous work GraphTER \cite{gao2020graphter} mainly in two aspects.
1) We formulate our model from an information-theoretic perspective by maximizing the mutual information between representations and transformations, which provides a theoretical derivation for the training objective and generalizes transformations to more general forms.
In contrast, GraphTER directly minimizes the MSE between the estimated and ground-truth transformations, which lacks theoretical explanation and is limited to parametric transformations;
2) We explicitly exploit transformations in the {\it graph topology}, which is crucial in graph representation learning and explores how the same input signals associated with different graph topologies would lead to equivariant representations, thus enabling deeper fusion of node features and the graph topology in GCNNs.
In contrast, GraphTER focuses on learning equivariant representations of nodes under node-wise transformations.

Our main contributions are summarized as follows.
\begin{itemize}
    \item We propose a self-supervised paradigm of the Topology Transformation Equivariant Representation learning to infer expressive node feature representations, which characterizes the intrinsic structures of graphs and the associated features by exploring the graph transformations of connectivity topology. 
    
    \item We formulate the Topology Transformation Equivariant Representation learning from an information-theoretic perspective, by maximizing the mutual information between feature representations and topology transformations, which is proved to relax to the cross entropy minimization between the applied transformations and the prediction in an end-to-end graph-convolutional auto-encoder architecture. 
  
    \item Experiments demonstrate that the proposed method outperforms the state-of-the-art unsupervised methods in node classification, graph classification, and link prediction.
\end{itemize}

The remainder of this paper is organized as follows.
We first review related works in Sec.~\ref{sec:related}. 
Then we formalize our model in Sec.~\ref{sec:formulation} and present the algorithm in Sec.~\ref{sec:method}.
Finally, experimental results and conclusions are presented in Sec.~\ref{sec:experiments} and Sec.~\ref{sec:conclusion}, respectively.

\section{Related Work}
\label{sec:related}

We review previous works on relevant unsupervised/self-supervised feature representation learning, including graph auto-encoders, graph generative models, graph contrastive learning, as well as transformation equivariant representation learning.

\subsection{Graph Auto-Encoders}

Graph Auto-Encoders (GAEs) are the most representative unsupervised methods.
GAEs encode graph data into feature space via an encoder and reconstruct the input graph data from the encoded feature representations via a decoder.
Kipf \textit{et al.} \cite{kipf2016variational} first integrate the GCN \cite{kipf2017semi} into an auto-encoder framework to learn graph representations in an unsupervised manner by reconstructing the adjacency matrix.
Variational GAE (VGAE) \cite{kipf2016variational} is a variational version of GAE to learn the distribution of data.
Cao \textit{et al.} \cite{cao2016deep} proposed to employ the stacked denoising auto-encoder \cite{vincent2008extracting} to reconstruct the positive pointwise mutual information (PPMI) matrix to capture the correlation of node pairs.
Wang \textit{et al.} \cite{wang2016structural} employ the stacked auto-encoders to preserve the first-order proximity and the second-order proximity of nodes jointly.
Qu \textit{et al.} \cite{qu2019gmnn} proposed the Graph Markov Neural Network (GMNN) to model the joint distribution of object labels with a conditional random field for semi-supervised and unsupervised graph representation learning.
Jiang \textit{et al.} \cite{jiang2020co} introduce a graph auto-encoder framework that can be used for unsupervised link prediction, where the encoder embeds both nodes and edges to a latent feature space simultaneously.
Compared with these auto-encoder-based methods that aim to decode the original graph, our proposed model makes better use of the topology information of graphs and the transformation equivariance property. 

\subsection{Graph Generative Networks}

Graph Generative Networks aim to learn the generative distribution of graphs by encoding graphs into hidden representations and generate graph structures given hidden representations.
The graph generative networks can be classified into two categories \cite{wu2020comprehensive}: sequential approaches and global approaches.
Sequential approaches generate nodes and edges step by step.
Deep Generative Model of Graphs (DeepGMG) \cite{li2018learning} assumes that the probability of a graph is the sum over all possible node permutations, and generates graphs by making a sequence of decisions.
You \textit{et al.} \cite{you2018graphrnn} proposed GraphRNN model to generate nodes from a graph-level RNN and edges from an edge-level RNN.
Global approaches generate an entire graph at once.
Molecular GAN (MolGAN) \cite{de2018molgan} combines Relational Graph Convolutional Networks (R-GCNs) \cite{schlichtkrull2018modeling}, GANs \cite{gulrajani2017improved}, and reinforcement learning objectives to generate graphs with desired properties.
NetGAN \cite{bojchevski2018netgan} combines LSTMs \cite{hochreiter1997long} with the Wasserstein GANs \cite{arjovsky2017wasserstein} to generate graphs from a random-walk-based approach.

\subsection{Graph Contrastive Learning}

An important paradigm called contrastive learning aims to train an encoder to be \textit{contrastive} between the representations of positive samples and negative samples \cite{hjelm2018learning,bachman2019learning,chen2020simple,he2020momentum,hu2021adco}.
Recent contrastive learning frameworks for graph data can be divided into two categories \cite{liu2020self}: \textit{context-instance} contrast and \textit{context-context} contrast.
Context-instance contrast focuses on modeling the relationships between the local feature of a sample and its global context representation.
Deep InfoMax (DIM) \cite{hjelm2018learning} first maximizes the mutual information between a local patch and its global context through a contrastive learning task.
Deep Graph InfoMax (DGI) \cite{velickovic2019deep} extends DIM to graph-structured data to learn node-level feature representations. 
Sun \textit{et al.} \cite{sun2020infograph} proposed an InfoGraph model to maximize the mutual information between the representations of entire graphs and the representations of substructures of different granularity.
Peng {\it et al.} \cite{peng2020graph} proposed a Graphical Mutual Information (GMI) approach to maximize the mutual information of both features and edges between inputs and outputs.
Compared with context-instance methods, context-context contrast studies the relationships between the global representations of different samples.
Caron \textit{et al.} \cite{caron2018deep} proposed a Deep Cluster approach to cluster encoded representations and produces pseudo labels for each sample, and then predicts whether two samples are from the same cluster.
Sun \textit{et al.} \cite{sun2020multi} adopts a self-supervised pre-training paradigm as in DeepCluster \cite{caron2018deep} for better semi-supervised prediction in GCNNs.
Qiu \textit{et al.} \cite{qiu2020gcc} designs the pre-training task as subgraph instance discrimination in and across networks to empower graph neural networks to learn intrinsic structural representations.
You \textit{et al.} \cite{you2020graph} proposed a novel graph contrastive learning framework for GNN pre-training to facilitate invariant representation learning, where four types of graph augmentation methods are designed to incorporate various priors.

\subsection{Transformation Equivariant Representations}

Many approaches have sought to learn transformation equivariant representations, which has been advocated in Hinton's seminal work on learning transformation capsules \cite{hinton2011transforming}.
Following this, a variety of approaches have been proposed to learn transformation equivariant representations \cite{gens2014deep,dieleman2015rotation,dieleman2016exploiting,cohen2016group,lenssen2018group}.
To generalize to generic transformations, Zhang \textit{et al.} \cite{zhang2019aet} proposed to learn unsupervised feature representations via Auto-Encoding Transformations (AET) by estimating transformations from the learned feature representations of both the original and transformed images, while Qi \textit{et al.} \cite{qi2019avt} extend AET from an information-theoretic perspective by maximizing the lower bound of mutual information between transformations and representations.
Wang \textit{et al.} \cite{wang2020transformation} extend the AET to Generative Adversarial Networks (GANs) for unsupervised image synthesis and representation learning.
Gao \textit{et al.} \cite{gao2020graphter} introduce the GraphTER model that extends AET to graph-structured data, which is formalized by auto-encoding node-wise transformations in an unsupervised manner.
De \textit{et al.} \cite{de2020gauge} proposed Gauge Equivariant Mesh CNNs which generalize GCNNs to apply anisotropic gauge equivariant kernels.
Fuchs \textit{et al.} \cite{fuchs2020se} introduce a self-attention mechanism specifically for 3D point cloud data, which adheres to equivariance constraints, improving robustness to nuisance transformations.
Haan \textit{et al.} \cite{de2020natural} proposed a Natural Graph Network (NGN) that can be used to describe maximally flexible global and local equivariance.
Satorras \textit{et al.} \cite{satorras2021n} present an E(n) equivariant graph neural network that is translation, rotation and reflection equivariant.
Gao \textit{et al.} \cite{gao2021self} proposed to learn multi-view representations by decoding the 3D transformations of 3D objects from multiple 2D views.
Our approach belongs to this family of methods.
The key difference is that, we propose to learn topology transformation equivariant representations by transforming the graph topology.

\section{The Proposed Formulation}
\label{sec:formulation}

In this section, we first introduce the preliminaries in Sec.~\ref{subsec:preliminary}, and define the topology transformation in Sec.~\ref{subsec:topo_transform}.
Then we formulate the proposed method in Sec.~\ref{subsec:formulation}.
Further, some analysis of the proposed model are presented in Sec.~\ref{subsec:analysis}.

\subsection{Preliminary}
\label{subsec:preliminary}

We consider an undirected graph $\mathcal{G}=\{\mathcal{V},\mathcal{E},\mathbf{A}\}$ composed of a node set $\mathcal{V}$ of cardinality $|\mathcal{V}|=N$, an edge set $\mathcal{E}$ connecting nodes of cardinality $|\mathcal{E}|=M$.
$\mathbf{A}$ is a real symmetric $N \times N$ matrix that encodes the graph structure, where $a_{i,j}=1$ if there exists an edge $(i,j)$ between nodes $i$ and $j$, and $a_{i,j}=0$ otherwise.
\textit{Graph signal} refers to data that reside on the nodes of a graph $\mathcal{G}$, denoted by $\mathbf{X} \in \mathbb{R}^{N \times C}$ with the $i$-th row representing the $C$-dimensional graph signal on the $i$-th node of $\mathcal{V}$.

\subsection{Topology Transformation}
\label{subsec:topo_transform}

\begin{figure*}[t]
  \centering
  \includegraphics[width=0.8\textwidth]{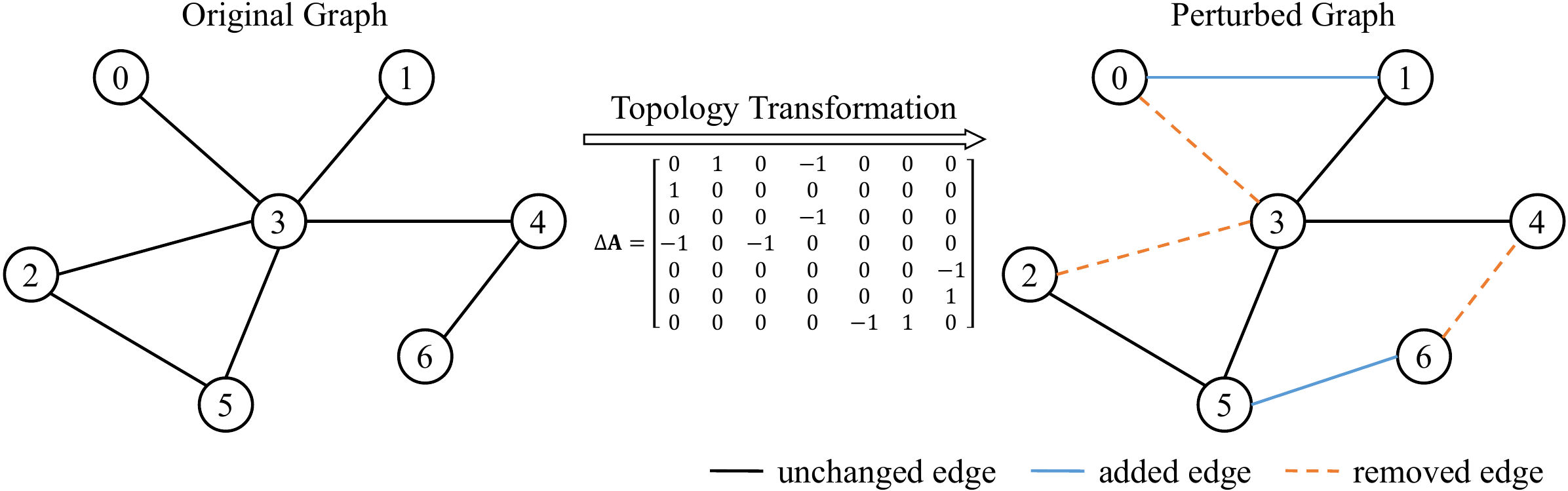}
  \caption{An example of graphs before and after topology transformations.}
  \label{fig:topo_perturb}
\end{figure*}

We define the topology transformation $\mathbf{t}$ as adding or removing edges from the original edge set $\mathcal{E}$ in graph $\mathcal{G}$.
This can be done by sampling, i.i.d., a \textit{switch} parameter $\sigma_{i,j}$ as in \cite{velickovic2019deep}, which determines whether to modify edge $(i,j)$ in the adjacency matrix.
Assuming a Bernoulli distribution $\mathcal{B}(p)$, where $p$ denotes the probability of each edge being modified, we draw a random matrix $\Sigma=\left\{\sigma_{i,j}\right\}_{N \times N}$ from $\mathcal{B}(p)$, \ie, $\Sigma \sim \mathcal{B}(p)$.
We then acquire the perturbed adjacency matrix as
\begin{equation}
  \tA = \A \oplus \Sigma,
\end{equation}
where $\oplus$ is the exclusive OR (XOR) operation.
This strategy produces a transformed graph through the topology transformation $\mathbf{t}$, \ie, $\tA = \mathbf{t}(\A)$.
Here, the edge perturbation probability of $p=0$ corresponds to a non-transformed adjacency matrix, which is a special case of an identity transformation to $\A$.

The transformed adjacency matrix $\tA$ can also be written as the sum of the original adjacency matrix $\A $ and a topology perturbation matrix $\dA$:
\begin{equation}
  \tA  = \A + \dA,
\end{equation}
where $\dA= \{\delta a_{i,j}\}_{N \times N}$ encodes the perturbation of edges, with $\delta a_{i,j} \in \{-1, 0, 1\}$.
As shown in Fig.~\ref{fig:topo_perturb}, when $\delta a_{i,j}=0$, the edge between node $i$ and node $j$ keeps unchanged (\ie, black solid lines); when $\delta a_{i,j} = -1$ or $1$, it means removing (\ie, orange dotted lines) or adding (\ie, blue solid lines) the edge between node $i$ and node $j$, respectively.

\subsection{The Formulation}
\label{subsec:formulation}

\begin{definition}
Given a pair of graph signal and adjacency matrix $(\X,\A)$, and a pair of graph signal and \textit{transformed} adjacency matrix $(\X,\tA)$ by a topology transformation $\mathbf{t}(\cdot)$, a function $E(\cdot)$ is \textit{transformation equivariant} if it satisfies
\begin{equation}
  E(\X,\tA)=E\left(\X,\mathbf{t}(\A)\right)=\rho(\mathbf{t})\left[E(\X,\A)\right],
  \label{eq:ter}
\end{equation}
where $\rho(\mathbf{t})[\cdot]$ is a homomorphism of transformation $\mathbf{t}$ in the representation space.
\end{definition}


Let us denote $ \H=E(\X,\A), \; \text{and} \; \tH=E(\X,\tA)$. 
We seek to learn an encoder $E: (\X,\A) \mapsto \H;(\X,\tA) \mapsto \tH$ that maps both the original and transformed sample to representations $\{\H,\tH\}$ equivariant to the sampled transformation $\mathbf{t}$, whose information can thus be inferred from the representations via a decoder $D: (\tH,\H) \mapsto \widehat{\dA}$ as much as possible. 
From an information-theoretic perspective, this requires $(\H,\dA)$ should jointly contain all necessary information about $\tH$.

Then a natural choice to formalize the topology transformation equivariance is the {\it mutual information} $I(\H,\dA;\tH)$ between $(\H,\dA)$ and $\tH$. The larger the mutual information is, the more knowledge about $\dA$ can be inferred from the representations $\{\H,\tH\}$. 
Hence, we propose to maximize the mutual information to learn the topology transformation equivariant representations as follows:
\begin{equation}
  \max_{\theta} \; I(\H,\dA;\tH),
  \label{eq:objective}
\end{equation}
where $\theta$ denotes the parameters of the auto-encoder network.




Nevertheless, it is difficult to compute the mutual information directly.
Instead, we derive that maximizing the mutual information can be relaxed to minimizing the cross entropy, as described in the following theorem. 

\vspace{0.1in}
\begin{theorem}
The maximization of the mutual information $I(\H,\dA;\tH)$ can be  relaxed to the minimization of the cross entropy $H(p \parallel q)$ between the probability distributions $p(\dA,\tH,\H)$ and $q_{\theta}(\widehat{\dA}|\tH,\H)$:
\begin{equation}
  \min_{\theta} \; H\left(p(\dA,\tH,\H) \; \| \; q_{\theta}(\widehat{\dA}|\tH,\H)\right).
  \label{eq:cross_entropy}
\end{equation}
\end{theorem}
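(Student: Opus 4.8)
The plan is to turn the intractable mutual information into a trainable loss through two moves: an exact decomposition of $I(\H,\dA;\tH)$ by the chain rule, followed by a variational relaxation of the residual conditional entropy using the decoder distribution $q_\theta$. As the first move, I would apply the chain rule for mutual information to separate the transformation-relevant term from the content term,
$$I(\H,\dA;\tH) = I(\H;\tH) + I(\dA;\tH\mid\H).$$
The summand $I(\H;\tH)$ measures the information that the original and transformed representations share independently of $\dA$; since it is non-negative it can only tighten any lower bound, so I would treat it as slack and concentrate on the conditional piece $I(\dA;\tH\mid\H)$.

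Expanding this conditional mutual information as a difference of conditional entropies gives
$$I(\dA;\tH\mid\H) = H(\dA\mid\H) - H(\dA\mid\tH,\H).$$
Because the topology transformation is produced by the fixed sampling scheme $\Sigma\sim\mathcal{B}(p)$, which is prescribed in advance rather than learned, the term $H(\dA\mid\H)$ carries no dependence on the network parameters $\theta$ and behaves as an additive constant. Maximizing $I(\dA;\tH\mid\H)$ over $\theta$ therefore amounts to minimizing the conditional entropy $H(\dA\mid\tH,\H)$.

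The essential and only genuinely lossy step is to relax the intractable true posterior $p(\dA\mid\tH,\H)$ by the parametric decoder distribution $q_\theta(\widehat{\dA}\mid\tH,\H)$. Inserting $q_\theta$ into the conditional entropy and invoking the non-negativity of the Kullback--Leibler (KL) divergence yields the exact identity
$$H(\dA\mid\tH,\H) = H(p\|q) - \mathbb{E}_{p(\tH,\H)}\!\left[D_{\mathrm{KL}}(p\,\|\,q_\theta)\right] \le H(p\|q),$$
where $H(p\|q) = -\mathbb{E}_{p(\dA,\tH,\H)}\!\left[\log q_\theta(\dA\mid\tH,\H)\right]$ is exactly the cross entropy in the statement. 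Chaining the three relations and using $I(\H;\tH)\ge 0$ gives
$$I(\H,\dA;\tH) \;\ge\; H(\dA\mid\H) - H(p\|q),$$
whose right-hand side differs from $-H(p\|q)$ only by the $\theta$-independent constant $H(\dA\mid\H)$. Hence minimizing the cross entropy $H(p\|q)$ maximizes a valid lower bound on the mutual information, which is precisely the claimed relaxation.

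I expect the crux to be the variational step: making rigorous that substituting the decoder $q_\theta$ for the unknown posterior produces a genuine bound (through $D_{\mathrm{KL}}\ge 0$) and stating clearly in what sense the objective is a relaxation, since the bound is tight only when $q_\theta$ coincides with $p(\dA\mid\tH,\H)$. A secondary point requiring care is the constancy of $H(\dA\mid\H)$: because each $\delta a_{i,j}$ depends on the current entry $a_{i,j}$ through the sign of the flip, $\dA$ is not literally independent of $\H$, so one must argue that, with the transformation protocol held fixed, this term contributes no $\theta$-dependence and can legitimately be absorbed into the constant.
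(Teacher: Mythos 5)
Your proposal follows essentially the same route as the paper's proof: the chain rule to isolate $I(\dA;\tH\mid\H)$ after dropping the non-negative slack $I(\H;\tH)$, the expansion into conditional entropies with $H(\dA\mid\H)$ treated as constant, and the variational upper bound $H(\dA\mid\tH,\H)\le H(p\,\|\,q_\theta)$ obtained from non-negativity of the KL divergence. The only divergence is in one justification: where the paper simply asserts that $\dA$ and $\H$ are independent so that $H(\dA\mid\H)=H(\dA)$, you correctly flag that this independence is questionable (the signs in $\dA$ depend on $\A$, as does $\H$) and instead argue that the term carries no $\theta$-dependence under the fixed sampling protocol --- a more careful reading of the same step, not a different proof.
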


\begin{proof}
By using the chain rule of mutual information, we have
\begin{equation}\nonumber
    I(\H,\dA;\tH) = I(\dA;\tH|\H)+I(\H;\tH) \ge I(\dA;\tH|\H).
\end{equation}
Thus the mutual information $I(\dA;\tH|\H)$ is the lower bound of the mutual information $I(\H,\dA;\tH)$ that attains its minimum value when $I(\H;\tH)=0$.

Therefore, we relax the objective to maximizing the lower bound mutual information $I(\dA;\tH|\H)$ between the transformed representation $\tH$ and the topology transformation $\dA$:
\begin{equation}\nonumber
  I(\dA;\tH|\H) = H(\dA|\H) - H(\dA|\tH,\H),
\end{equation}
where $H(\cdot)$ denotes the conditional entropy. Since $\dA$ and $\H$ are independent, we have $H(\dA|\H)=H(\dA)$.
Hence, maximizing $I(\dA;\tH|\H)$ becomes
\begin{equation}
  \min_{\theta} \; H(\dA|\tH,\H).
  \label{eq:min_conditional_entropy}
\end{equation}
However, this minimization problem requires to evaluate the posterior probability distribution $q(\dA|\tH,\H)$, which is often difficult to calculate directly.
We next introduce a conditional probability distribution $q_{\theta}(\widehat{\dA}|\tH,\H)$ to approximate the intractable posterior $q(\dA|\tH,\H)$ with an estimated transformation $\widehat{\dA}$, \textit{i.e.},
\begin{equation}\nonumber
\begin{split}
  &H(\dA|\tH,\H) = -\underset{p(\dA,\tH,\H)}{\mathbb{E}}\log q(\dA|\tH,\H) \\
  =&-\underset{p(\dA,\tH,\H)}{\mathbb{E}}\log q_{\theta}(\widehat{\dA}|\tH,\H) \\
  & -\underset{p(\tH,\H)}{\mathbb{E}}D_{\text{KL}}\left(q(\dA|\tH,\H)\parallel q_{\theta}(\widehat{\dA}|\tH,\H)\right) \\
  \leq& -\underset{p(\dA,\tH,\H)}{\mathbb{E}}\log q_{\theta}(\widehat{\dA}|\tH,\H),
\end{split}
\end{equation}
where $D_{\text{KL}}\left(q(\dA|\tH,\H)\parallel q_{\theta}(\widehat{\dA}|\tH,\H)\right)$ denotes the Kullback-Leibler divergence of $q(\dA|\tH,\H)$ and $q_{\theta}(\widehat{\dA}|\tH,\H)$ that is non-negative.
Thus, the minimization problem in Eq.~(\ref{eq:min_conditional_entropy}) is converted to minimizing the cross entropy as the upper bound:
\begin{equation}\nonumber
  \begin{split}
    \min_{\theta} \; & H\left(p(\dA,\tH,\H) \; \| \; q_{\theta}(\widehat{\dA}|\tH,\H)\right) \\
    \triangleq & -\underset{p(\dA,\tH,\H)}{\mathbb{E}} \log q_{\theta}(\widehat{\dA}|\tH,\H).
  \end{split}
\end{equation}
Hence, we relax the maximization problem in Eq.~(\ref{eq:objective}) to the optimization in Eq.~(\ref{eq:cross_entropy}).
\end{proof}

Based on \textbf{Theorem 1}, we train the decoder $D$ to learn the distribution $q_{\theta}(\widehat{\dA}|\tH,\H)$ so as to estimate the topology transformation $\widehat{\dA}$ from the encoded $\{\tH,\H\}$, where the input pairs of original and transformed graph representations $\{\tH,\H\}$ as well as the ground truth target $\dA$ can be sampled tractably from the factorization of $p(\dA,\tH,\H)\triangleq p(\dA)p(\H)p(\tH|\dA,\H)$. This allows us to minimize the {\it cross entropy} between $p(\dA)$ and $q_{\theta}(\widehat{\dA}|\tH,\H)$ with the training triplets $(\tH,\H;\dA)$ drawn from the tractable factorization of $p(\dA,\tH,\H)$.
Hence, we formulate the Topology Transformation Equivariant Representation learning as the joint optimization of the representation encoder $E$ and the transformation decoder $D$. 


\subsection{Analysis}
\label{subsec:analysis}

Since we learn the topology transformation equivariant representations by maximizing the mutual information between representations and transformations as discussed in Sec.~\ref{subsec:formulation}, the proposed model {\it approximately} learns equivariant representations for graph data as defined in Eq.~(\ref{eq:ter}) via the optimization in Eq.~(\ref{eq:cross_entropy}).
This distinguishes from relevant attempts on equivariance for graph data \cite{fuchs2020se,de2020gauge,de2020natural,satorras2021n}, which design {\it exact} equivariant kernels for graph data.

Though equivariance is explicitly satisfied in \cite{fuchs2020se,de2020gauge,de2020natural,satorras2021n}, the equivariant kernels depend on the modalities of the input data, and thus are not generalizable to various tasks. 
This is because they exploit representative information in different data modalities instead of their commonalities, \eg, 3D coordinates of point clouds \cite{fuchs2020se} and the angles of two neighboring points in meshes \cite{de2020gauge}.
For instance, Fuchs \et \cite{fuchs2020se} proposed an SE(3)-equivariant convolutional network by constructing an attention-based SE(3)-Transformer specifically for 3D point clouds.
Haan \et \cite{de2020gauge} presented an anisotropic gauge equivariant kernel for mesh data, which is applied into graph convolutional networks and results in equivalent outputs regardless of the arbitrary choice of kernel orientation.
Also, Haan \et \cite{de2020natural} proposed Natural Graph Networks for isomorphic graphs that are equivariant to node permutations.
Further, Satorras \et \cite{satorras2021n} came up with a new graph convolution kernel, which makes inputs equivariant to parameterized orthogonal transformations (\eg, rotations, translations and reflections) and permutations for data such as molecules.
In all, these methods design equivariant network kernels tailored for specific graph data, which have been applied in supervised graph representation learning.

In contrast, we propose an {\it unsupervised} model that generalizes to different downstream tasks of various graph data, without restrictions to transformations or types of graph data, while providing good approximations of equivariant representations via the proposed effective optimization.



\section{The Proposed Algorithm}
\label{sec:method}

\begin{figure*}[t]
  \centering
  \includegraphics[width=0.8\textwidth]{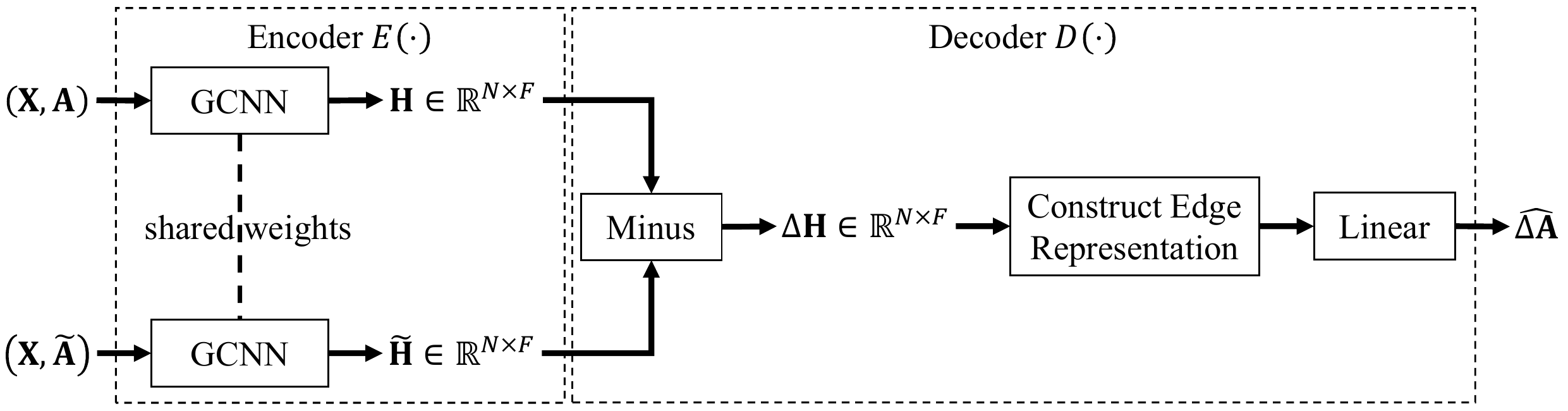}
  \caption{The architecture of the proposed model.}
  \label{fig:framework}
\end{figure*}

We design a graph-convolutional auto-encoder network for the proposed model, as illustrated in Fig.~\ref{fig:framework}.
Given a graph signal $\X$ associated with a graph $\mathcal{G}=\{\mathcal{V},\mathcal{E},\mathbf{A}\}$, the proposed unsupervised learning algorithm consists of three steps:
1) topology transformation, which samples and perturbs some edges from all node pairs to acquire a transformed adjacency matrix $\tA$;
2) representation encoding, which extracts the feature representations of graph signals before and after the topology transformation;
3) transformation decoding, which estimates the topology transformation parameters from the learned feature representations. 
We elaborate on the three steps as follows. 


\subsection{Topology Transformation} 
We randomly sample a subset of node pairs from all node pairs for topology perturbation---adding or removing edges, which not only enables to characterize local graph structures at various scales, but also reduces the number of edge transformation parameters to estimate for computational efficiency.
In practice, in each iteration of training, we sample {\it all} the node pairs with connected edges $\S_1$ (\ie, $\S_1=\mathcal{E}$), and randomly sample a subset of disconnected node pairs $\S_0$, \ie,
\begin{equation}
  \S_0=\left\{(i,j) \big| a_{i,j}=0 \right\}, \S_1=\left\{(i,j) \big| a_{i,j}=1 \right\},
\end{equation}
where $|\S_0|=|\S_1|=M$. 
Next, we randomly split $\S_0$ and $\S_1$ into two disjoint sets, respectively, \ie,
\begin{equation}
  \begin{split}
    \S_i=\bigg\{\S_i^{(1)},\S_i^{(2)} \; \big | \; & \S_i^{(1)} \cap \S_i^{(2)} = \varnothing, \S_i^{(1)} \cup \S_i^{(2)} = \S_i, \\
    & |\S_i^{(1)}|=r \cdot |\S_i| \bigg\}, i \in \{0,1\},
  \end{split}
  \label{eq:split}
\end{equation}
where $r$ is the {\it edge perturbation rate}.
Then, for each node pair $(i,j)$ in $\S_0^{(1)}$ and $\S_1^{(1)}$, we {\it flip} the corresponding entry in the original graph adjacency matrix. That is, if $a_{i,j}=0$, then we set $\tilde{a}_{i,j}=1$; otherwise, we set $\tilde{a}_{i,j}=0$.
For each node pair $(i,j)$ in $\S_0^{(2)}$ and $\S_1^{(2)}$, we keep the original connectivities unchanged, \ie, $\tilde{a}_{i,j}=a_{i,j}$.

This leads to the transformed adjacency matrix $\tA$, as well as the sampled transformation parameters by accessing $\dA$ at position $(i,j)$ from $\S_0$ and $\S_1$.
Also, we can category the sampled topology transformation parameters into four types: 
\begin{enumerate}[leftmargin=12pt]
    \item add an edge to a disconnected node pair, \ie, $\{\mathbf{t}:a_{i,j}=0 \mapsto \tilde{a}_{i,j}=1, (i,j) \in \S_0^{(1)}\}$;
    
    \item delete the edge between a connected node pair, \ie, $\{\mathbf{t}:a_{i,j}=1 \mapsto \tilde{a}_{i,j}=0, (i,j) \in \S_1^{(1)}\}$;
    
    \item keep the disconnection between node pairs in $\S_0^{(2)}$, \ie, $\{\mathbf{t}:a_{i,j}=0 \mapsto \tilde{a}_{i,j}=0, (i,j) \in \S_0^{(2)}\}$; 
    
    \item keep the connection between node pairs in $\S_1^{(2)}$, \ie, $\{\mathbf{t}:a_{i,j}=1 \mapsto \tilde{a}_{i,j}=1, (i,j) \in \S_1^{(2)}\}$.
\end{enumerate}

Thus, we cast the problem of estimating transformation parameters in $\dA$ from $(\tH,\H)$ as the classification problem of the transformation parameter types. 
The percentage of these four types is $r:r:(1-r):(1-r)$. 

\subsection{Representation Encoder} We train an encoder $E: (\X,\A) \mapsto E(\X,\A)$ to encode the feature representations of each node in the graph.
As demonstrated in Fig.~\ref{fig:framework}, we leverage GCNNs with shared weights to extract feature representations of each node in the graph signal.
Taking the GCN \cite{kipf2017semi} as an example, the graph convolution in the GCN is defined as
\begin{equation}
  \H=E(\X,\A)=\D^{-\frac{1}{2}}(\A+\I)\D^{-\frac{1}{2}}\X\W,
  \label{eq:before_transform}
\end{equation}
where $\D$ is the degree matrix of $\A+\I$, $\W \in \mathbb{R}^{C \times F}$ is a learnable parameter matrix, and $\H = [\h_1,...,\h_N]^{\top} \in \mathbb{R}^{N \times F}$ denotes the node-wise feature matrix with $F$ output channels.

From Eq.~(\ref{eq:before_transform}), we see that node-wise representations from GCN are updated in two steps: 1) feature propagation and aggregation, and 2) linear transformation.

The first step aims to aggregate the features of each node $v_i$ and its local neighborhood, \eg, 1-hop neighborhood,
\begin{equation}
  \widehat{\h}_i=\frac{1}{d_i}\x_i+\sum_{i \sim j}\frac{a_{i,j}}{\sqrt{d_i \cdot d_j}}\x_j,
  \label{eq:feat_prop_aggr}
\end{equation}
where $i \sim j$ represents node $i$ and $j$ are connected, and $\widehat{\h}_i$ denotes the aggregated features of node $v_i$.
Eq.~(\ref{eq:feat_prop_aggr}) can also be expressed over the entire graph by matrix multiplication, \ie, $\widehat{\H}=\D^{-\frac{1}{2}}(\A+\I)\D^{-\frac{1}{2}}\X$.

The second step performs a linear transformation with a learnable parameter matrix $\W$ on the aggregated features $\widehat{\H}$ to generate node embeddings for the GCN layer, \ie, $\H=\widehat{\H}\W$.

Similarly, the node feature of the transformed counterpart is as follows with the shared weights $\W$.  
\begin{equation}
\begin{split}
  \tH & =E(\X,\tA) =\tD^{-\frac{1}{2}}(\tA+\I)\tD^{-\frac{1}{2}}\X\W \\
  & =\tD^{-\frac{1}{2}}(\A+\I)\tD^{-\frac{1}{2}}\X\W + \tD^{-\frac{1}{2}}\dA\tD^{-\frac{1}{2}}\X\W.
\end{split}
\label{eq:after_transform}
\end{equation}
We thus acquire the feature representations $\H$ and $\tH$ of graph signals before and after topology transformations.

\subsection{Transformation Decoder} 
Comparing Eq.~(\ref{eq:before_transform}) and Eq.~(\ref{eq:after_transform}), the prominent difference between $\tH$ and $\H$ lies in the second term of Eq.~(\ref{eq:after_transform}) featuring $\dA$.  
This enables us to train a decoder $D: (\tH,\H) \mapsto \widehat{\dA}$ to estimate the topology transformation from the joint representations before and after transformation.
We first take the difference between the extracted feature representations before and after transformations along the feature channel,
\begin{equation}
  \dH = \tH - \H = [\dh_1, ..., \dh_N]^{\top} \in \mathbb{R}^{N \times F}.
  \label{eq:feature_diff}
\end{equation}
Thus, we can predict the topology transformation between node $i$ and node $j$ through the node-wise feature difference $\dH$ by constructing the \textit{edge representation} as
\begin{equation}
  \begin{split}
    \mathbf{e}_{i,j} = \frac{\exp\{-(\dh_i - \dh_j) \odot (\dh_i - \dh_j)\}}{\| \exp\{-(\dh_i - \dh_j) \odot (\dh_i - \dh_j)\} \|_1} & , \\
    \forall (i,j) \in \S_0 \cup \S_1 & ,
  \end{split}
\end{equation}
where $\odot$ denotes the Hadamard product of two vectors to capture the feature representation, and $\|\cdot\|_1$ is the $\ell_1$-norm of a vector for normalization.
The edge representation ${\mathbf{e}}_{i,j}$ of node $i$ and $j$ is then fed into several linear layers for the prediction of the topology transformation,
\begin{equation}
  \widehat{\mathbf{y}}_{i,j}=\mathrm{softmax}\left(\mathrm{linear}({\mathbf{e}}_{i,j})\right), \quad \forall (i,j) \in \S_0 \cup \S_1,
\end{equation}
where $\mathrm{softmax}(\cdot)$ is an activation function.

According to Eq.~(\ref{eq:cross_entropy}), the entire auto-encoder network is trained by minimizing the cross entropy
\begin{equation}
  \mathcal{L}=-\underset{(i,j) \in \S_0 \cup \S_1}{\mathbb{E}}\sum_{f=1}^{4}\mathbf{y}_{i,j}^{(f)} \log \widehat{\mathbf{y}}_{i,j}^{(f)},
  \label{eq:loss}
\end{equation}
where $f$ denotes the transformation type ($f \in \{1,2,3,4\}$), and $\mathbf{y}$ is the ground-truth binary indicator ($0$ or $1$) for each transformation parameter type.

\begin{table*}[t]
\centering
\caption{Node classification accuracies (with standard deviation) in percentage on three datasets. 
$\X,\A,\Y$ denote the input data, adjacency matrix and labels respectively.
}
\label{tab:results_nc}
\begin{tabularx}{0.7\textwidth}{lYYYY}
\hline
\multicolumn{1}{c|}{\textbf{Method}} & \multicolumn{1}{c}{\textbf{Training Data}} & \multicolumn{1}{c}{\textbf{Cora}} & \multicolumn{1}{c}{\textbf{Citeseer}} & \multicolumn{1}{c}{\textbf{Pubmed}} \\ \hline
\multicolumn{5}{c}{\textbf{Semi-Supervised Methods}} \\ \hline
\multicolumn{1}{l|}{GCN \cite{kipf2017semi}} & $\X,\A,\Y$ & $81.5$ & $70.3$ & $79.0$ \\
\multicolumn{1}{l|}{MoNet \cite{monti2017geometric}} & $\X,\A,\Y$ & $81.7 \pm 0.5$ & - & $78.8 \pm 0.3$ \\
\multicolumn{1}{l|}{GAT \cite{velivckovic2018graph}} & $\X,\A,\Y$ & $83.0 \pm 0.7$ & $72.5 \pm 0.7$ & $79.0 \pm 0.3$ \\
\multicolumn{1}{l|}{SGC \cite{wu2019simplifying}} & $\X,\A,\Y$ & $81.0 \pm 0.0$ & $71.9 \pm 0.1$ & $78.9 \pm 0.0$ \\
\multicolumn{1}{l|}{GWNN \cite{xu2019graph}} & $\X,\A,\Y$ & $82.8$ & $71.7$ & $79.1$ \\
\multicolumn{1}{l|}{MixHop \cite{abu2019mixhop}} & $\X,\A,\Y$ & $81.9 \pm 0.4$ & $71.4 \pm 0.8$ & $80.8 \pm 0.6$ \\
\multicolumn{1}{l|}{DFNet \cite{wijesinghe2019dfnets}} & $\X,\A,\Y$ & $85.2 \pm 0.5$ & $74.2 \pm 0.3$ & $84.3 \pm 0.4$ \\ \hline
\multicolumn{5}{c}{\textbf{Unsupervised Methods}} \\ \hline
\multicolumn{1}{l|}{Raw Features \cite{velickovic2019deep}} & $\X$ & $47.9 \pm 0.4$ & $49.3 \pm 0.2$ & $69.1 \pm 0.3$ \\
\multicolumn{1}{l|}{DeepWalk \cite{perozzi2014deepwalk}} & $\A$ & $67.2$ & $43.2$ & $65.3$ \\
\multicolumn{1}{l|}{DeepWalk + Features \cite{velickovic2019deep}} & $\X,\A$ & $70.7 \pm 0.6$ & $51.4 \pm 0.5$ & $74.3 \pm 0.9$ \\
\multicolumn{1}{l|}{GAE \cite{kipf2016variational}} & $\X,\A$ & $80.9 \pm 0.4$ & $66.7 \pm 0.4$ & $77.1 \pm 0.7$ \\
\multicolumn{1}{l|}{VGAE \cite{kipf2016variational}} & $\X,\A$ & $80.0 \pm 0.2$ & $64.1 \pm 0.2$ & $76.9 \pm 0.1$ \\
\multicolumn{1}{l|}{DGI \cite{velickovic2019deep}} & $\X,\A$ & $81.1 \pm 0.1$ & $71.4 \pm 0.2$ & $77.0 \pm 0.2$ \\
\multicolumn{1}{l|}{GMI \cite{peng2020graph}} & $\X,\A$ & $82.2 \pm 0.2$ & $71.4 \pm 0.5$ & $78.5 \pm 0.1$ \\
\multicolumn{1}{l|}{\textbf{Ours}} & $\X,\A$ & $\mathbf{83.7 \pm 0.3}$ & $\mathbf{71.7 \pm 0.5}$ & $\mathbf{79.1 \pm 0.1}$ \\ \hline
\end{tabularx}
\end{table*}

\section{Experiments}
\label{sec:experiments}

In this section, we evaluate the proposed model on three representative downstream tasks: node classification, graph classification, and link prediction.

\subsection{Node Classification}

\subsubsection{Datasets}

We adopt three citation networks to evaluate our model: Cora, Citeseer, and Pubmed \cite{sen2008collective}.
The dataset statistics are reported in Tab.~\ref{tab:data_stat}.
The three datasets contain sparse bag-of-words feature vectors for each document and a list of citation links between documents.
We treat documents as nodes, and the citation links as (undirected) edges, leading to a binary and symmetric adjacency matrix $\A$ as in \cite{kipf2017semi}.
We follow the standard train/test split in \cite{kipf2017semi} to conduct the experiments, where the label rate denotes the number of labeled nodes that are used for training.

\begin{table}[htbp]
\centering
\caption{Dataset statistics of citation networks.}
\label{tab:data_stat}
\begin{tabular}{l|rrrrr}
\hline
\textbf{Dataset} & \multicolumn{1}{c}{\textbf{Nodes}} & \multicolumn{1}{c}{\textbf{Edges}} & \multicolumn{1}{c}{\textbf{Classes}} & \multicolumn{1}{c}{\textbf{Features}} & \multicolumn{1}{c}{\textbf{Label rate}} \\ \hline
Citeseer & 3,327 & 4,732 & 6 & 3,703 & 0.036 \\
Cora & 2,708 & 5,429 & 7 & 1,433 & 0.052 \\
Pubmed & 19,717 & 44,338 & 3 & 500 & 0.003 \\ \hline
\end{tabular}
\end{table}

\subsubsection{Implementation Details}

In this task, the auto-encoder network is trained via Adam optimizer, and the learning rate is set to $10^{-4}$.
We use the same early stopping strategy as DGI \cite{velickovic2019deep} on the observed training loss, with a patience of $20$ epochs.
We deploy one Simple Graph Convolution (SGC) layer \cite{wu2019simplifying} as our encoder, and the order of the adjacency matrix is set to $2$.
The LeakyReLU activation function with a negative slope of $0.1$ is employed after the SGC layer.
Similar to DGI \cite{velickovic2019deep}, we set the output channel $F=512$ for Cora and Citeseer dataset, and $256$ for Pubmed dataset due to memory limitations.
After the encoder, we use one linear layer to classify the transformation types.
We set the edge perturbation rate in Eq.~(\ref{eq:split}) as $r=\{0.7, 0.4, 0.7\}$ for Cora, Citeseer, and Pubmed, respectively.

During the training procedure of the classifier, the SGC layer in the encoder is used to extract graph feature representations with the weights frozen.
After the SGC layer, we apply one linear layer to map the features to the classification scores.

\subsubsection{Experimental Results}

We compare the proposed method with five unsupervised methods, including one node embedding method DeepWalk, two graph auto-encoders GAE and VGAE \cite{kipf2016variational}, and two contrastive learning methods DGI \cite{velickovic2019deep} and GMI \cite{peng2020graph}.
Additionally, we report the results of Raw Features and DeepWalk+Features \cite{perozzi2014deepwalk} under the same settings.
For fair comparison, the results of all other unsupervised methods are reproduced by using the same encoder architecture of the {proposed method} except DeepWalk and Raw Features.
We report the mean classification accuracy (with standard deviation) on the test nodes for all methods after $50$ runs of training.
As reported in Tab.~\ref{tab:results_nc}, the {proposed method} outperforms all other competing unsupervised methods on three datasets.
Further, the proposed unsupervised method also achieves comparable performance with semi-supervised results.
This significantly closes the gap between unsupervised approaches and the semi-supervised methods.

Moreover, we compare the proposed {method} with two contrastive learning methods DGI and GMI in terms of the model complexity, as reported in Tab.~\ref{tab:size_nc}.
The number of parameters in our model is less than that of DGI and even less than half of that of GMI, which further shows the {proposed model} is lightweight.

\begin{table}[t]
\centering
\caption{Model size comparison among DGI, GMI, and Ours.}
\label{tab:size_nc}
\begin{tabular}{c|ccc}
\hline
Model & DGI & GMI & Ours \\ \hline
No. of Parameters & $996,354$ & $1,730,052$ & $\mathbf{736,260}$ \\ \hline
\end{tabular}
\end{table}

\begin{figure*}[t]
  \centering
  \subfigure[Cora]{
  \includegraphics[width=0.3\textwidth]{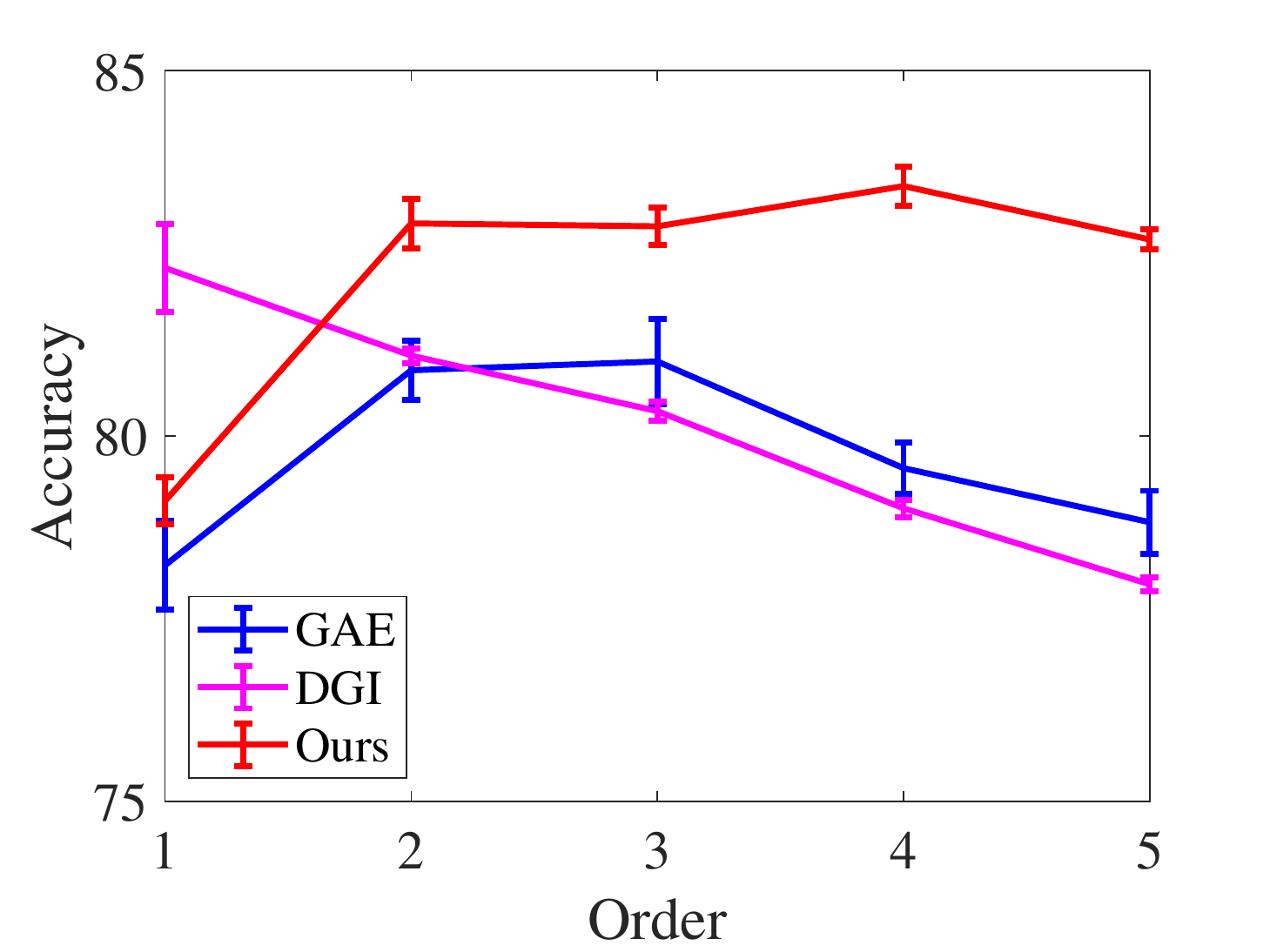}
  \label{subfig:cora_order}
  }
  \subfigure[Citeseer]{
  \includegraphics[width=0.3\textwidth]{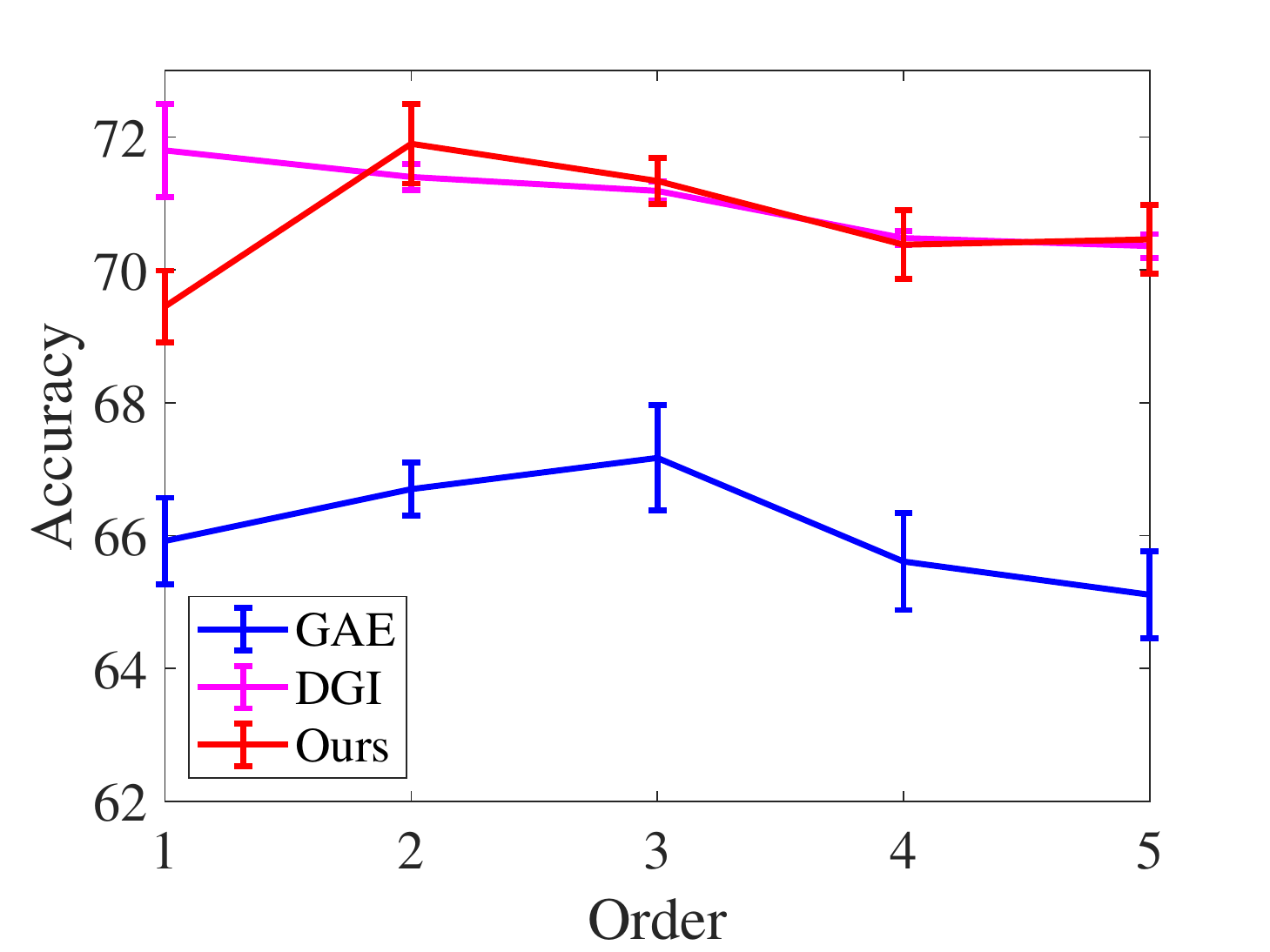}
  \label{subfig:citeseer_order}
  }
  \subfigure[Pubmed]{
  \includegraphics[width=0.3\textwidth]{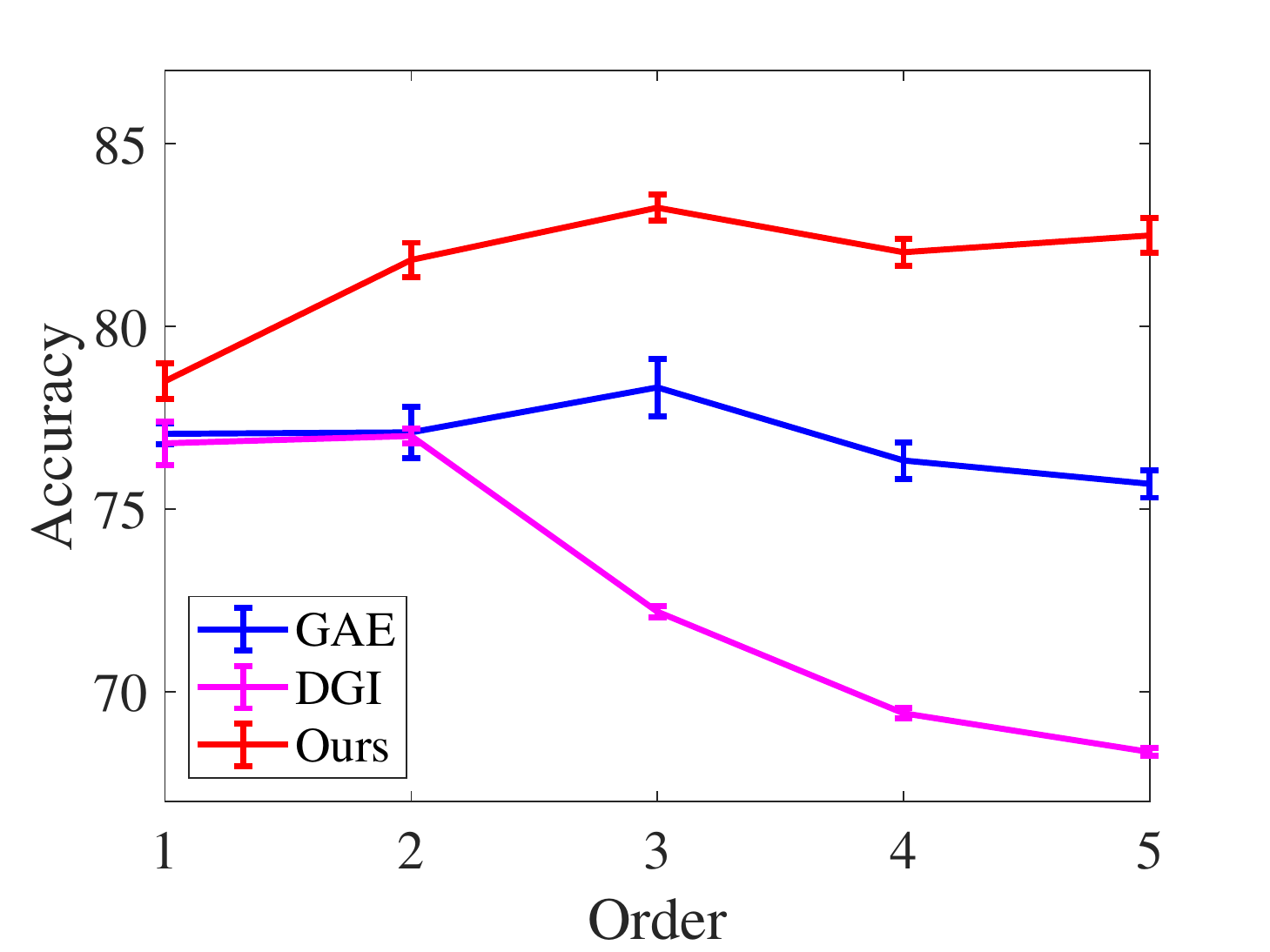}
  \label{subfig:pubmed_order}
  }
  \caption{Node classification accuracies under different orders of the adjacency matrix on the Cora, Citeseer, and Pubmed datasets.}
  \label{fig:order}
\end{figure*}

\subsubsection{Experiments on Different Orders of The Adjacency Matrix}
\label{app:order}

As presented in Sec.~\ref{subsec:topo_transform}, we perturb the $1$-hop neighborhoods via the proposed topology transformations, leading to possibly significant changes in the graph topology. 
This increases the difficulties of predicting the topology transformations when using one-layer GCN \cite{kipf2017semi} by aggregating the $1$-hop neighborhood information.
Therefore, we employ one Simple Graph Convolution (SGC) layer \cite{wu2019simplifying} with order $k$ as our encoder $E(\cdot)$, where the output feature representations aggregate multi-hop neighborhood information.
Formally, the SGC layer is defined as
\begin{equation}
  \H=E(\X,\A)=\left(\D^{-\frac{1}{2}}(\A+\I)\D^{-\frac{1}{2}}\right)^{k}\X\W,
\end{equation}
where $\D$ is the degree matrix of $\A+\I$, $\W \in \mathbb{R}^{C \times F}$ is a learnable parameter matrix, and $k$ is the order of the normalized adjacency matrix.

To study the influence of different orders of the adjacency matrix, we adopt five orders from $1$ to $5$ to train five models on the node classification task.
Fig.~\ref{fig:order} presents the node classification accuracy under different orders of the adjacency matrix for the proposed method and DGI respectively.
As we can see, the proposed method achieves best classification performance when $k=\{4,2,3\}$ on the three datasets respectively, and outperforms GAE in different orders.
When $k=1$, our model still achieves reasonable results although it is difficult to predict the topology transformations from $1$-hop neighborhood information;
when $k>1$, our model outperforms DGI by a large margin on Cora and Pubmed dataset, and achieves comparable results to DGI on Citeseer dataset.
This is because DGI adopts feature shuffling to generate negative samples, which is insufficient to learn contrastive feature representations when aggregating multi-hop neighborhood information, while the proposed method takes advantage of multi-hop neighborhood information to predict the topology transformations, leading to improved performance.

\begin{figure}[t]
  \centering
  \includegraphics[width=0.7\columnwidth]{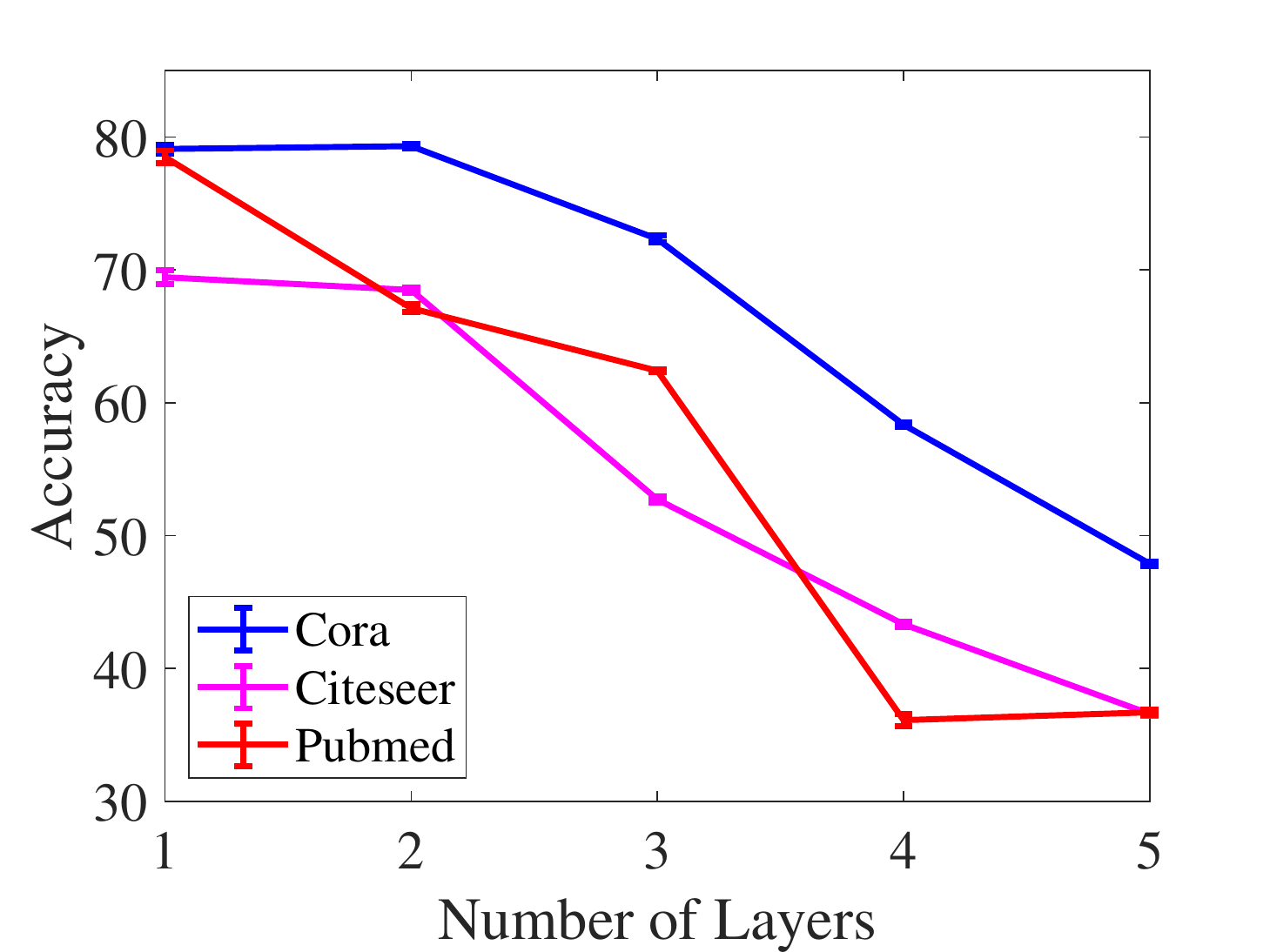}
  \caption{Node classification accuracies under different numbers of SGC layers on the Cora, Citeseer, and Pubmed datasets.}
  \label{fig:num_layers}
\end{figure}

\subsubsection{Experiments on Different Numbers of SGC Layers}

We conduct ablation studies on the number of SGC layers to study the influence of deeper encoder architecture on the node classification task.
In this experiment, we adopt an encoder with five numbers of SGC layers ranging from $1$ to $5$ to train five models on the {\it Cora}, {\it Citeseer}, and {\it Pubmed} datasets.
To prevent memory issues during the training, the channel of each hidden layer is set to $64$, and the output channel is $512$.
Fig.~\ref{fig:num_layers} presents the node classification accuracy under different numbers of SGC layers for the proposed method.
We see that, when the number of layers is small (\textit{e.g.}, $1$ or $2$), our model achieves reasonable results on the three datasets;
when the number of SGC layers increases, the performance of our model decreases quickly due to {\it overfitting}.
This is consistent with the conclusion in \cite{kipf2017semi} that overfitting can become an issue as the number of parameters increases with model depth in graph convolutional neural networks.

\begin{table*}[t]
\centering
\caption{Graph classification accuracies (with standard deviation) in percentage on $6$ datasets. 
``\textgreater 1 Day" represents that the computation exceeds 24 hours. ``OOM" is out of memory error.}
\label{tab:results_gc}
\begin{tabular}{lcccccc}
\hline
\multicolumn{1}{l|}{\textbf{Dataset}} & MUTAG & PTC-MR & RDT-B & RDT-M5K & IMDB-B & IMDB-M \\
\multicolumn{1}{l|}{\textbf{(\#Graphs)}} & 188 & 344 & 2000 & 4999 & 1000 & 1500 \\
\multicolumn{1}{l|}{\textbf{(\#Classes)}} & 2 & 2 & 2 & 5 & 2 & 3 \\
\multicolumn{1}{l|}{\textbf{(Avg. \#Nodes)}} & 17.93 & 14.29 & 429.63 & 508.52 & 19.77 & 13.00 \\
\hline
\multicolumn{7}{c}{\textbf{Graph Kernel Methods}} \\ \hline
\multicolumn{1}{l|}{RW} & $83.72 \pm 1.50$ & $57.85 \pm 1.30$ & OOM & OOM & $50.68 \pm 0.26$ & $34.65 \pm 0.19$ \\
\multicolumn{1}{l|}{SP} & $85.22 \pm 2.43$ & $58.24 \pm 2.44$ & $64.11 \pm 0.14$ & $39.55 \pm 0.22$ & $55.60 \pm 0.22$ & $37.99 \pm 0.30$ \\
\multicolumn{1}{l|}{GK} & $81.66 \pm 2.11$ & $57.26 \pm 1.41$ & $77.34 \pm 0.18$ & $41.01 \pm 0.17$ & $65.87 \pm 0.98$ & $43.89 \pm 0.38$ \\
\multicolumn{1}{l|}{WL} & $80.72 \pm 3.00$ & $57.97 \pm 0.49$ & $68.82 \pm 0.41$ & $46.06 \pm 0.21$ & $72.30 \pm 3.44$ & $46.95 \pm 0.46$ \\
\multicolumn{1}{l|}{DGK} & $87.44 \pm 2.72$ & $60.08 \pm 2.55$ & $78.04 \pm 0.39$ & $41.27 \pm 0.18$ & $66.96 \pm 0.56$ & $44.55 \pm 0.52$ \\
\multicolumn{1}{l|}{MLG} & $87.94 \pm 1.61$ & $63.26 \pm 1.48$ & \textgreater 1 Day & \textgreater 1 Day & $66.55 \pm 0.25$ & $41.17 \pm 0.03$ \\ \hline
\multicolumn{7}{c}{\textbf{Supervised Methods}} \\ \hline
\multicolumn{1}{l|}{GCN} & $85.6 \pm 5.8$ & $64.2 \pm 4.3$ & $50.0 \pm 0.0$ & $20.0 \pm 0.0$ & $74.0 \pm 3.0$ & $51.9 \pm 3.8$ \\
\multicolumn{1}{l|}{GraphSAGE} & $85.1 \pm 7.6$ & $63.9 \pm 7.7$ & - & - & $72.3 \pm 5.3$ & $50.9 \pm 2.2$ \\
\multicolumn{1}{l|}{GIN-0} & $89.4 \pm 5.6$ & $64.6 \pm 7.0$ & $92.4 \pm 2.5$ & $57.5 \pm 1.5$ & $75.1 \pm 5.1$ & $52.3 \pm 2.8$ \\
\multicolumn{1}{l|}{GIN-$\epsilon$} & $89.0 \pm 6.0$ & $63.7 \pm 8.2$ & $92.2 \pm 2.3$ & $57.0 \pm 1.7$ & $74.3 \pm 5.1$ & $52.1 \pm 3.6$ \\
\multicolumn{1}{l|}{NGN} & $89.4 \pm 1.6$ & $66.8 \pm 1.8$ & - & - & $74.8 \pm 2.0$ & $51.3 \pm 1.5$ \\ \hline
\multicolumn{7}{c}{\textbf{Unsupervised Methods}} \\ \hline
\multicolumn{1}{l|}{node2vec} & $72.63 \pm 10.20$ & $58.58 \pm 8.00$ & - & - & - & - \\
\multicolumn{1}{l|}{sub2vec} & $61.05 \pm 15.80$ & $59.99 \pm 6.38$ & $71.48 \pm 0.41$ & $36.68 \pm 0.42$ & $55.26 \pm 1.54$ & $36.67 \pm 0.83$ \\
\multicolumn{1}{l|}{graph2vec} & $83.15 \pm 9.25$ & $60.17 \pm 6.86$ & $75.78 \pm 1.03$ & $47.86 \pm 0.26$ & $71.10 \pm 0.54$ & $\mathbf{50.44 \pm 0.87}$ \\
\multicolumn{1}{l|}{InfoGraph} & $89.01 \pm 1.13$ & $61.65 \pm 1.43$ & $82.50 \pm 1.42$ & $53.46 \pm 1.03$ & $73.03 \pm 0.87$ & $49.69 \pm 0.53$ \\
\multicolumn{1}{l|}{\textbf{Ours}} & $\mathbf{89.25 \pm 0.81}$ & $\mathbf{64.59 \pm 1.26}$ & $\mathbf{84.93 \pm 0.18}$ & $\mathbf{55.52 \pm 0.20}$ & $\mathbf{73.46 \pm 0.38}$ & $49.68 \pm 0.31$ \\ \hline
\end{tabular}
\end{table*}

\subsubsection{Robustness Test}

To evaluate the robustness of our model on the node classification task, we jitter the original node features with an additive noise model, namely,
\begin{equation}
  \tX=\X+\E,
\end{equation}
where $\X\in\mathbb{R}^{N\times F}$ is the original node features, $\E\in\mathbb{R}^{N\times F}$ is a random matrix which is sampled from a random distribution (\eg, Gaussian or Laplace), and $\tX$ denotes the noise-corrupted node features.

Specifically, we select the zero-mean Gaussian noise with a range of standard deviation $\sigma$ from $0.01$ to $0.10$ at an interval of $0.01$, as well as the zero-location Laplace noise with a range of scale parameter $s$ from $0.01$ to $0.10$ at an interval of $0.01$, for extensive classification performance comparison.
We employ one SGC layer as the encoder $E(\cdot)$ of the proposed model and two representative self-supervised model DGI and GAE, where the order of the adjacency matrix is set to $1$.
We use the original node features $\X$ to train the three models in the unsupervised training stage and the linear classifier in the supervised evaluation stage, and employ the corrupted node features $\tX$ to evaluate the classification accuracies.

The classification performance under Gaussian and Laplace noises are presented in Fig.~\ref{fig:gaussian_noise} and Fig.~\ref{fig:laplace_noise}, respectively.
When the noise level is low, our model outperforms GAE by a large margin on the Cora and Citeseer datasets, and achieves comparable results to DGI on the three datasets.
When the noise level is high, our model significantly outperforms GAE and DGI on the three datasets.
This is because DGI takes the original graph topology to aggregate node features.
When the original node features are seriously corrupted (the noise level is high), the aggregated features will change significantly.
GAE reconstructs the adjacency matrix from the feature representations of individual nodes. The high dependency of the graph topology and node features leads to bad performance when the node features suffer from serious noise perturbations.
In contrast, our model aims to predict the topology transformations from the feature representations of nodes before and after transformation, which not only employs the original graph topology information, but also explores how node features would change by applying a topology transformation, thus enhancing the robustness.

\begin{figure*}[t]
  \centering
  \subfigure[Cora]{
  \includegraphics[width=0.3\textwidth]{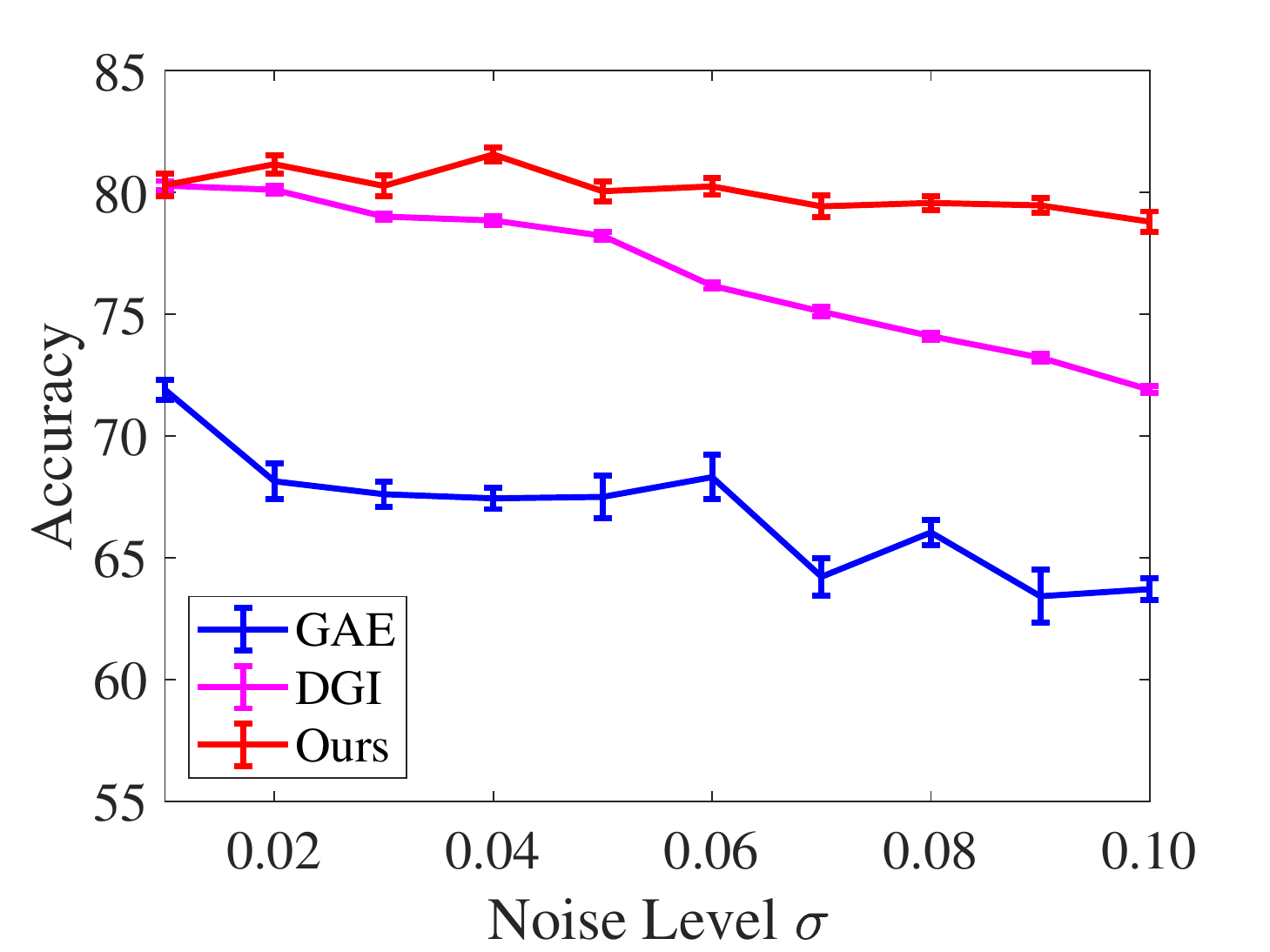}
  \label{subfig:cora_gaussian}
  }
  \subfigure[Citeseer]{
  \includegraphics[width=0.3\textwidth]{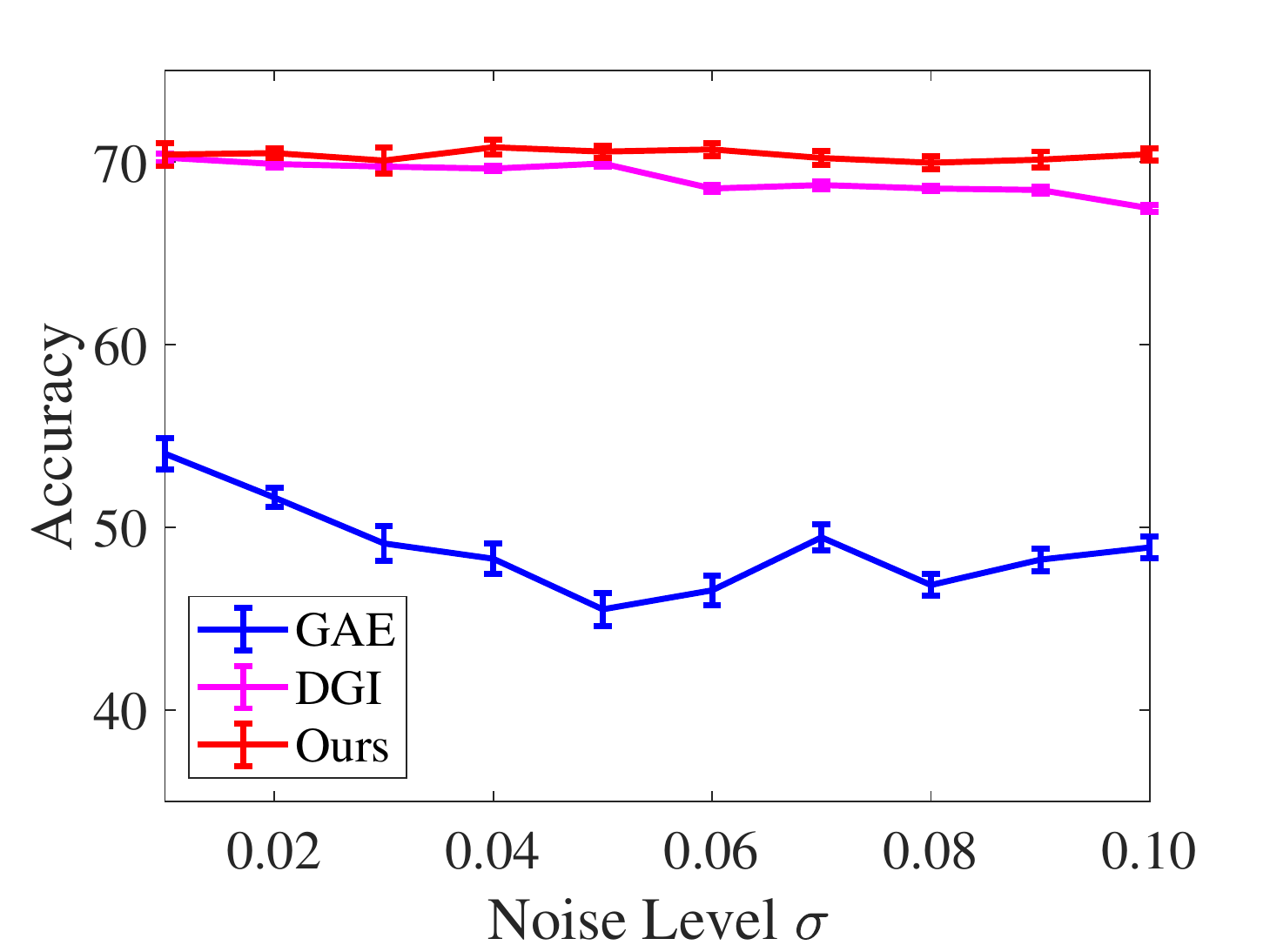}
  \label{subfig:citeseer_gaussian}
  }
  \subfigure[Pubmed]{
  \includegraphics[width=0.3\textwidth]{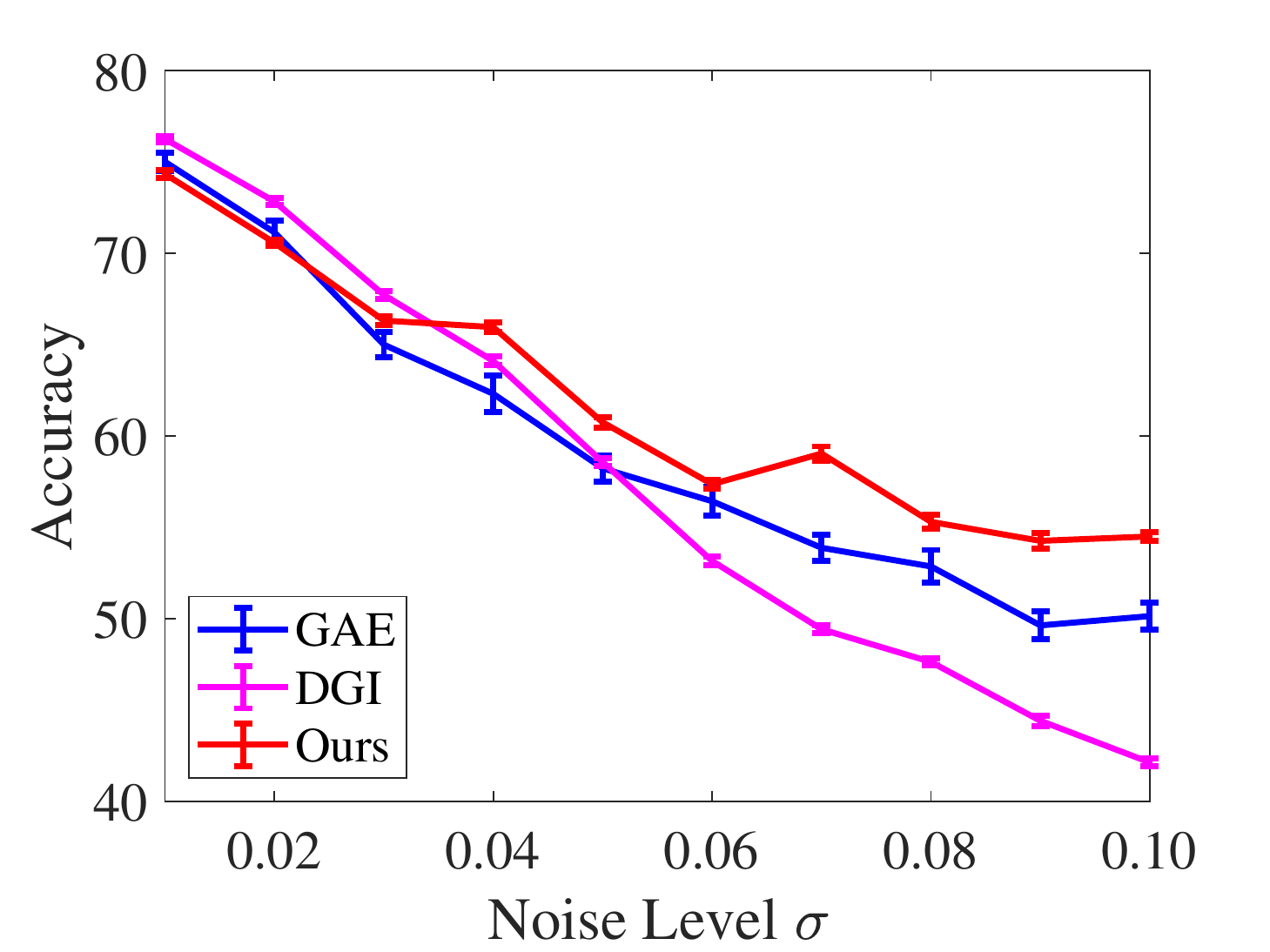}
  \label{subfig:pubmed_gaussian}
  }
  \caption{\textbf{Node classification accuracies under different levels of Gaussian noise on the Cora, Citeseer, and Pubmed datasets.} The horizontal axis and vertical axis represent different noise levels and classification accuracies, receptively.}
  \label{fig:gaussian_noise}
\end{figure*}

\begin{figure*}[t]
  \centering
  \subfigure[Cora]{
  \includegraphics[width=0.3\textwidth]{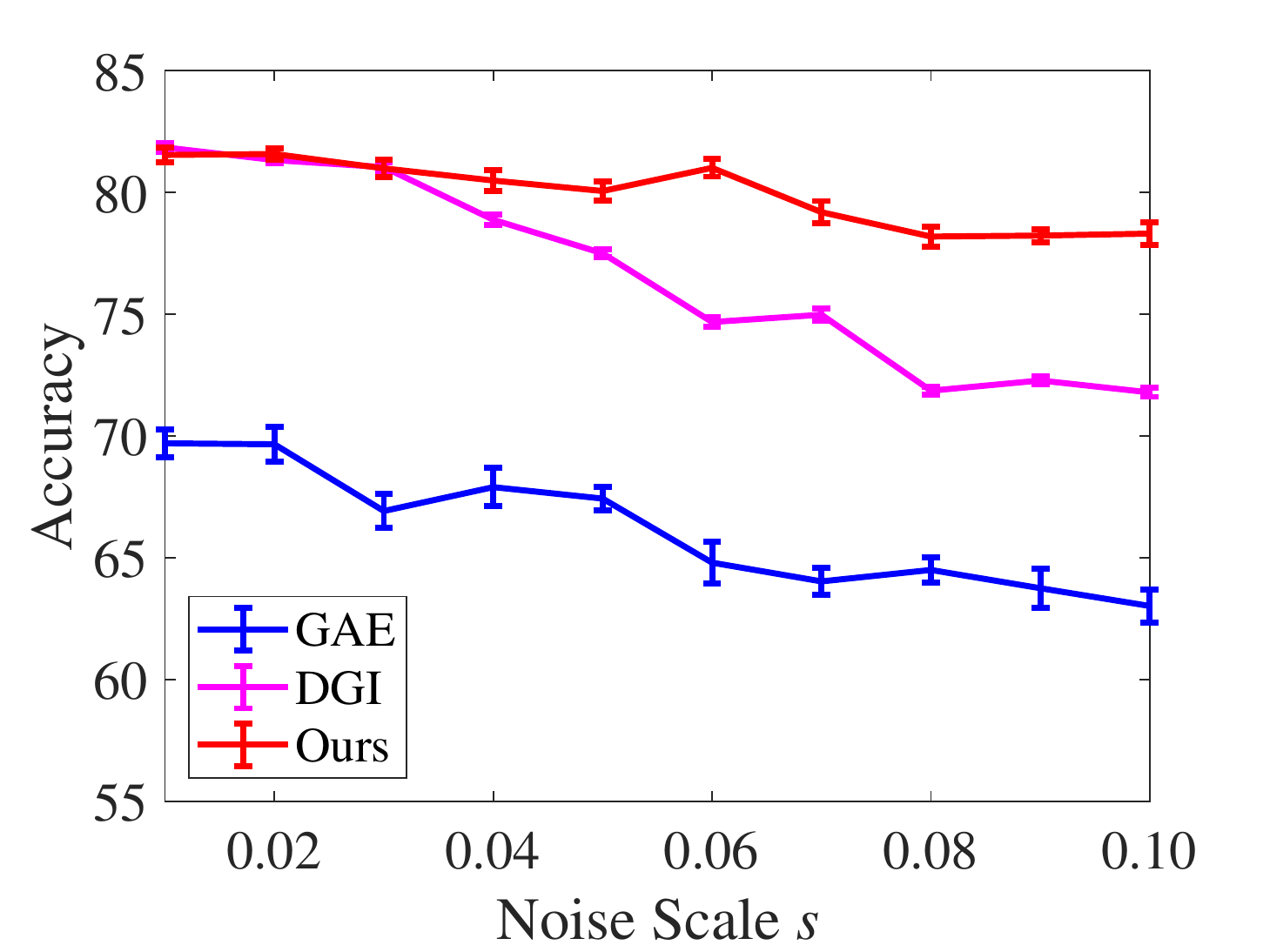}
  \label{subfig:cora_laplace}
  }
  \subfigure[Citeseer]{
  \includegraphics[width=0.3\textwidth]{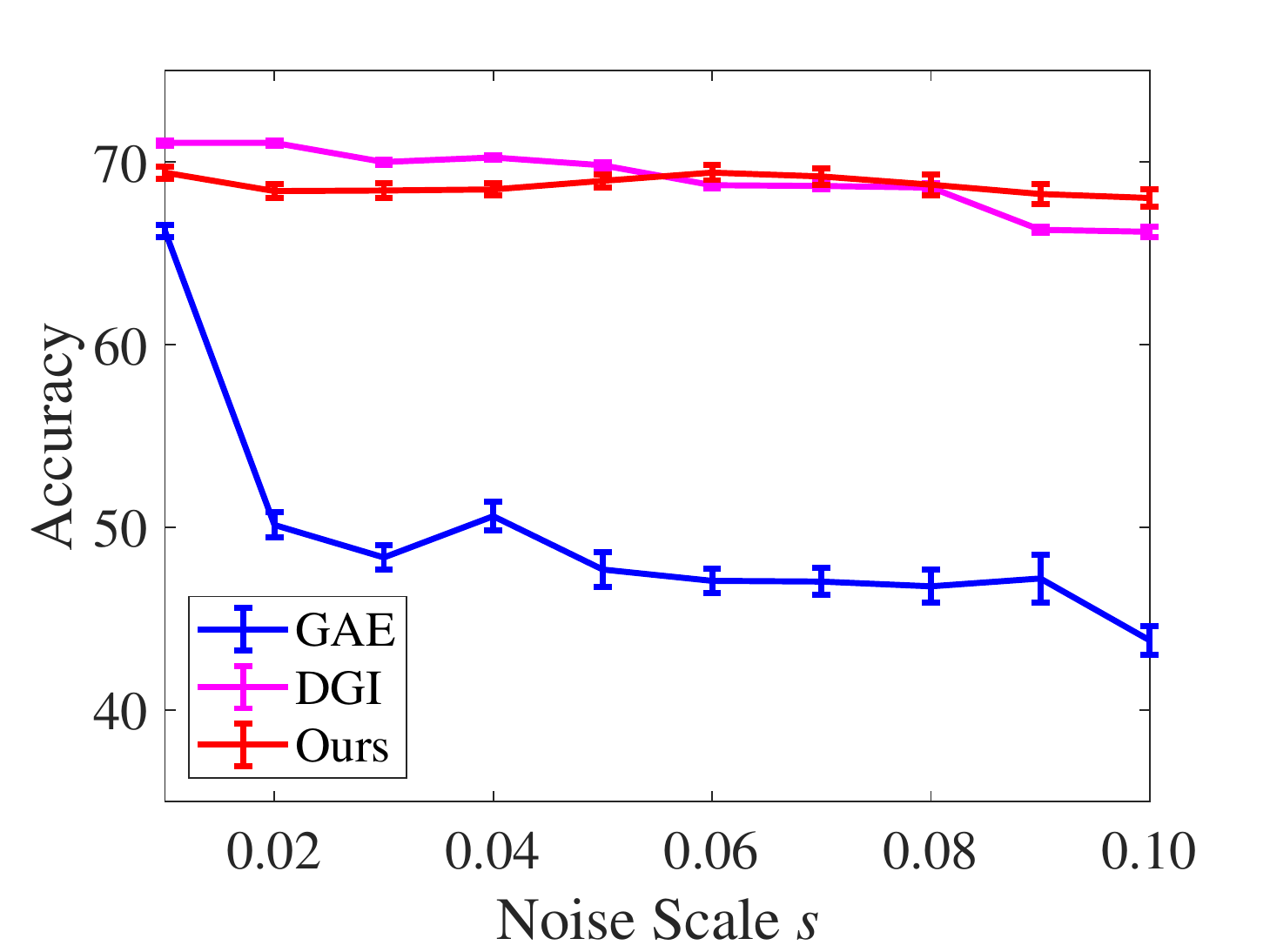}
  \label{subfig:citeseer_laplace}
  }
  \subfigure[Pubmed]{
  \includegraphics[width=0.3\textwidth]{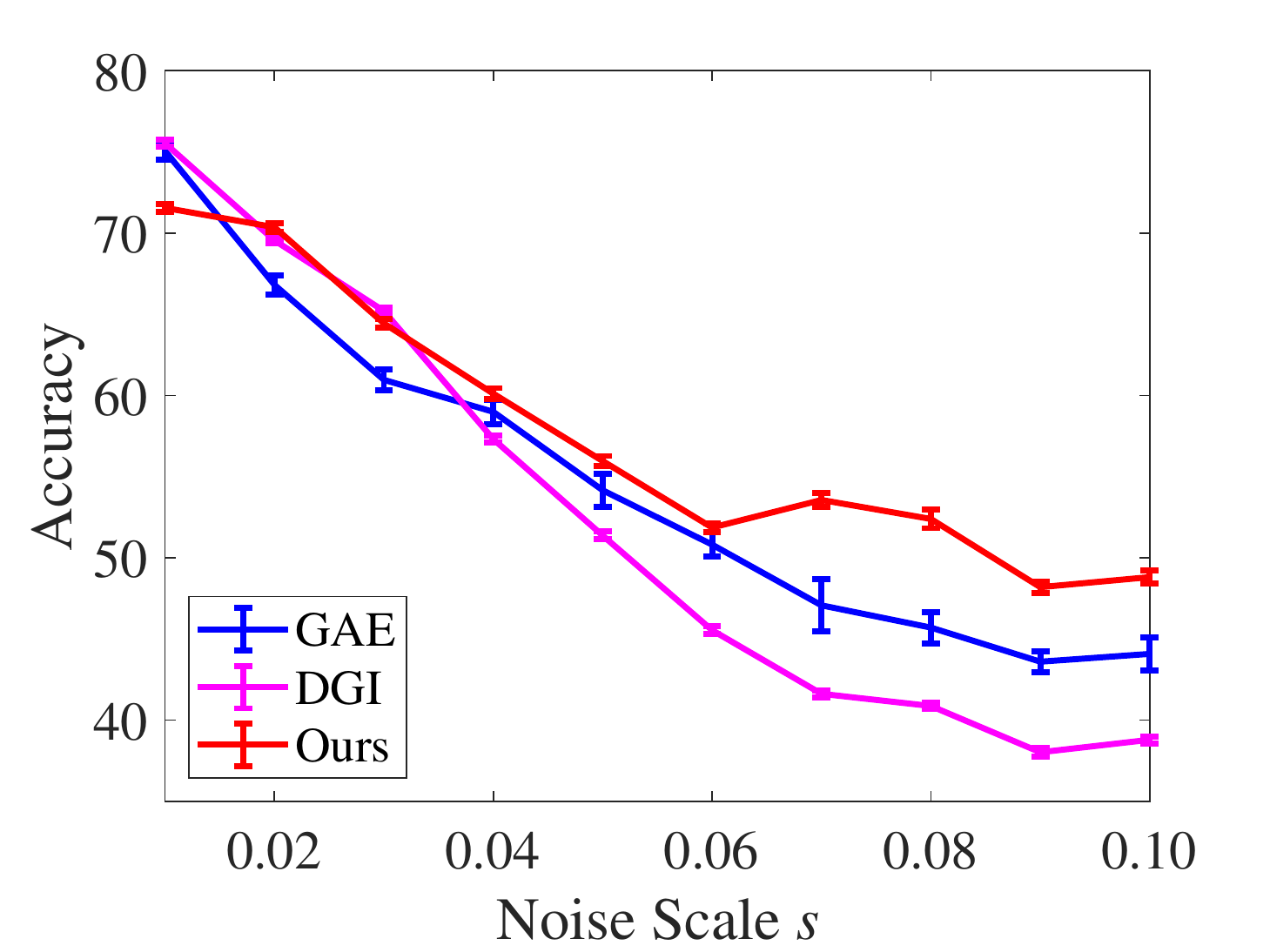}
  \label{subfig:pubmed_laplace}
  }
  \caption{\textbf{Node classification accuracies under different scales of Laplace noise on the Cora, Citeseer, and Pubmed datasets.} The horizontal axis and vertical axis represent different noise levels and classification accuracies, receptively.}
  \label{fig:laplace_noise}
\end{figure*}

\subsubsection{Experiments On Different Edge Perturbation Rates}
\label{app:perturb_rate}

Further, we evaluate the influence of the edge perturbation rate in Eq.~(\ref{eq:split}) on the node classification task.
We choose $11$ edge perturbation rates from $0.0$ to $1.0$ at an interval of $0.1$ to train the proposed model.
We use one SGC layer as our encoder $E(\cdot)$, where the order of the adjacency matrix is set to $1$.
As presented in Fig.~\ref{fig:perturbation_rate}, the blue solid line with error bar shows the classification accuracy of our method under different edge perturbation rates.
We also provide the classification accuracy on feature representations of graphs from a randomly initialized encoder $E(\cdot)$, denoted as \textit{Random Init.}, which serves as the lower bound of the performance. 

\begin{figure*}[t]
  \centering
  \subfigure[Cora]{
  \includegraphics[width=0.3\textwidth]{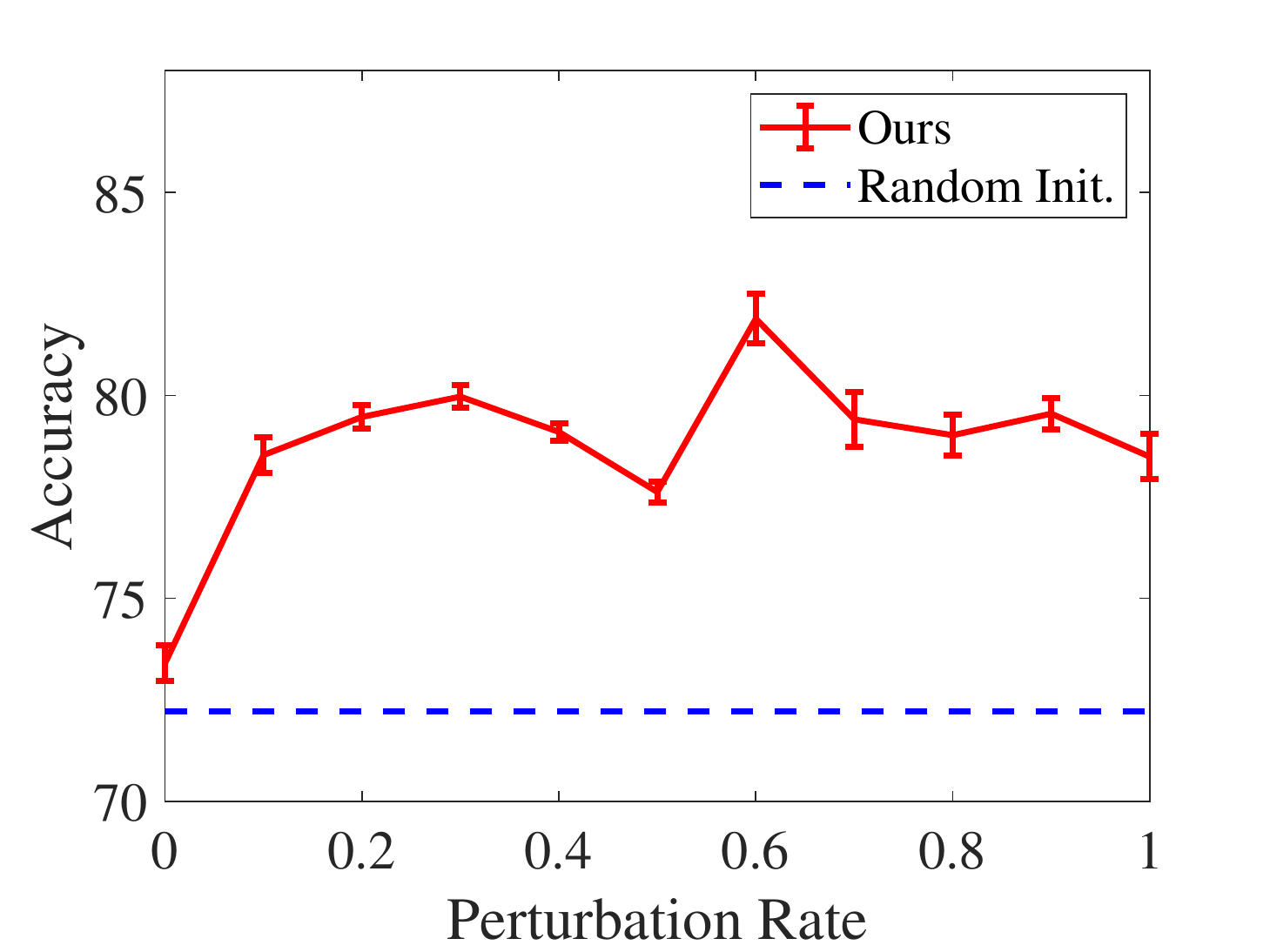}
  \label{subfig:cora}
  }
  \subfigure[Citeseer]{
  \includegraphics[width=0.3\textwidth]{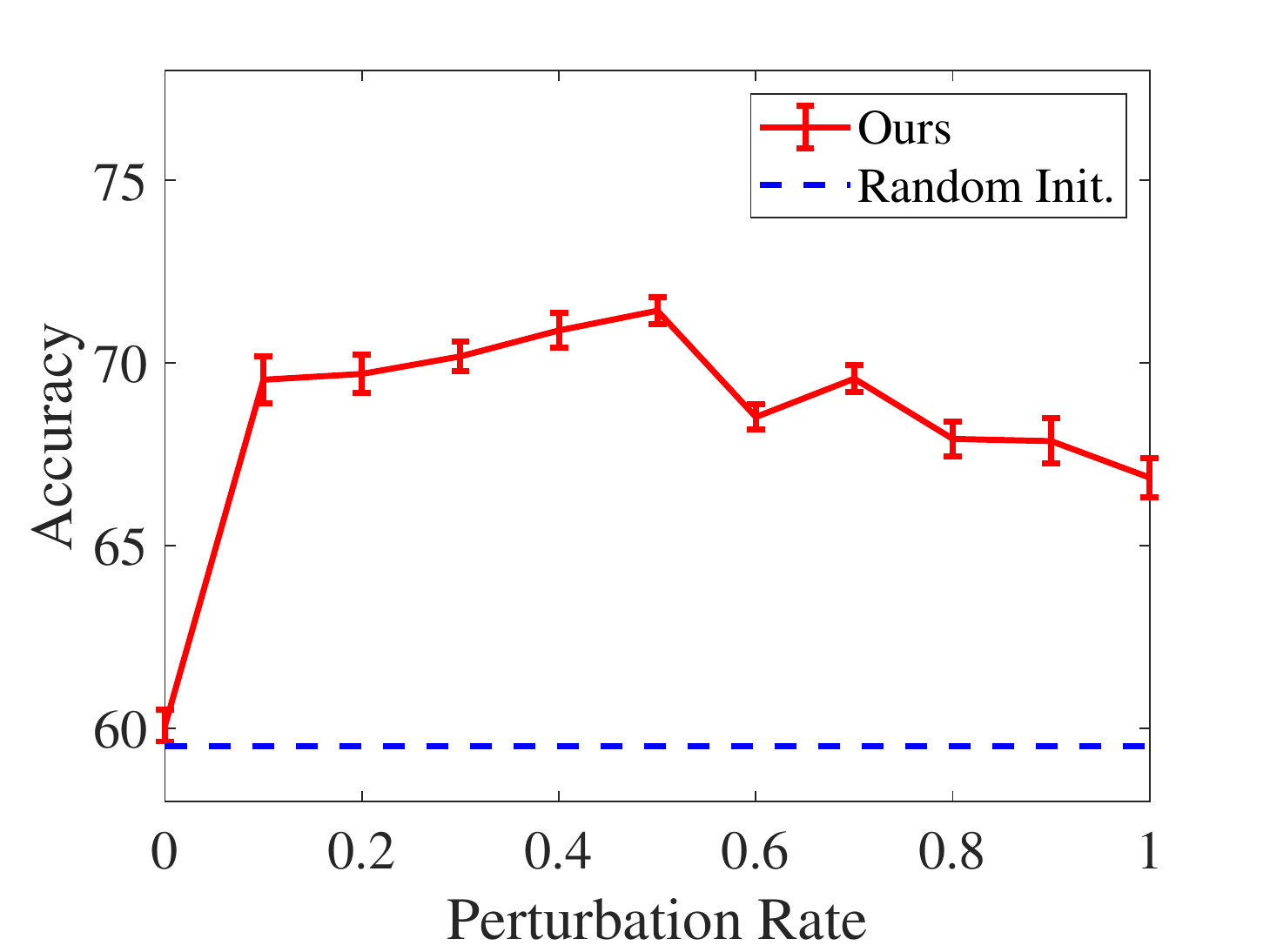}
  \label{subfig:citeseer}
  }
  \subfigure[Pubmed]{
  \includegraphics[width=0.3\textwidth]{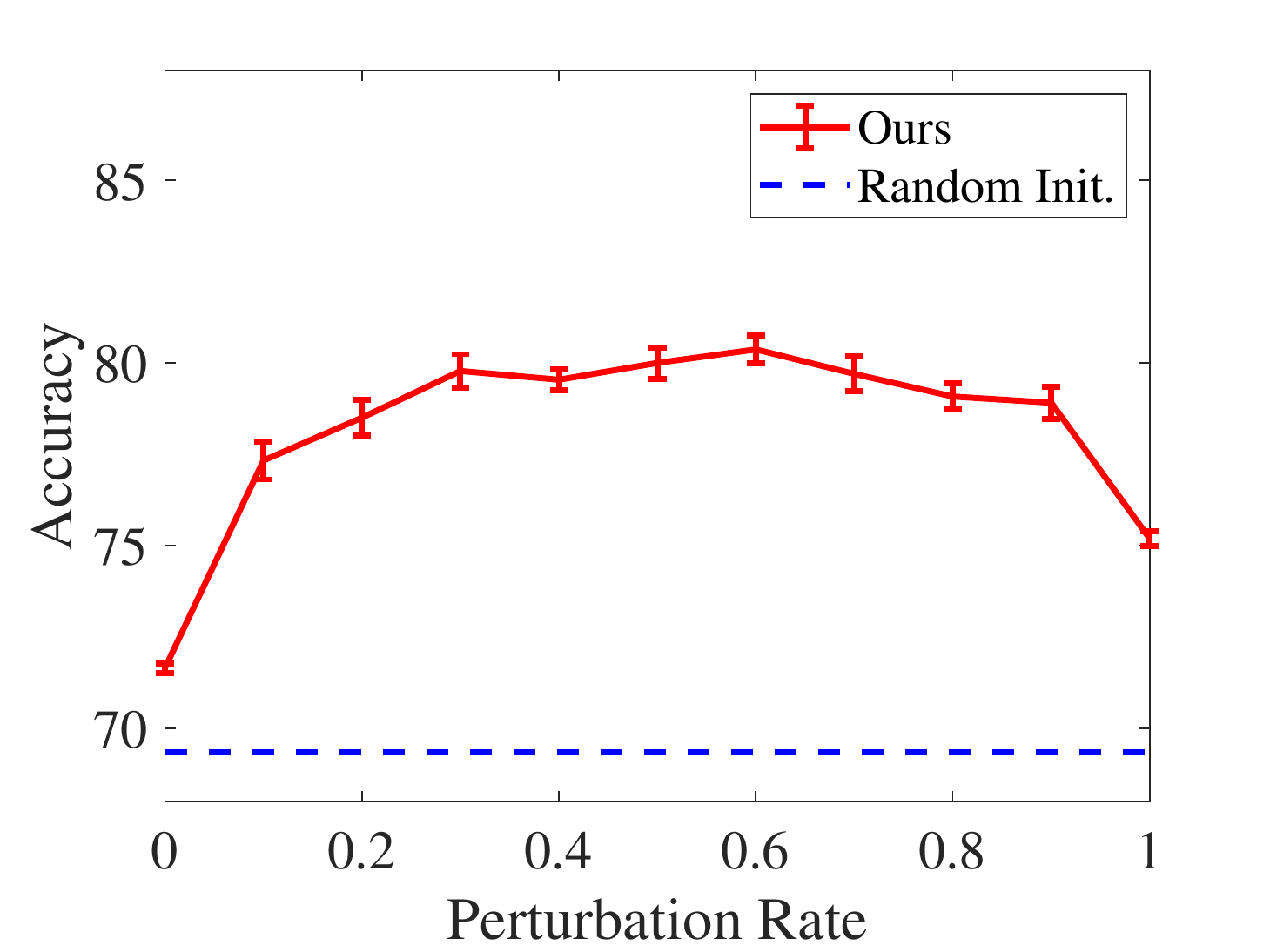}
  \label{subfig:pubmed}
  }
  \caption{\textbf{Node classification accuracies under different edge perturbation rates on the Cora, Citeseer, and Pubmed datasets.} The horizontal axis and vertical axis represent perturbation rates and classification accuracies, respectively.}
  \label{fig:perturbation_rate}
\end{figure*}

As we can see, the classification performance reaches the best when the graph is perturbed under a reasonable edge perturbation rate, \eg, $r=\{0.6,0.5,0.6\}$ for the Cora, Citeseer, and Pubmed dataset, respectively.
When the edge perturbation rate $r=0.0$, the unsupervised training task of the proposed model becomes link prediction, which cannot take advantage of the proposed method by predicting the topology transformations;
when the edge perturbation rate $r=1.0$, our {model} still achieves reasonable classification results, which shows the stability of our model under high edge perturbation rates.
At the same time, we observe that the proposed {method} outperforms \textit{Random Init.} by a large margin, which validates the effectiveness of the proposed unsupervised training strategy.

\subsection{Graph Classification}

\subsubsection{Datasets} 
We conduct graph classification on six well-known graph benchmark datasets \cite{yanardag2015deep}, including two molecule datasets MUTAG and PTC, and four social network datasets REDDIT-BINARY, REDDIT-MULTI-5K, IMDB-BINARY, and IMDB-MULTI.
In the two molecule datasets, graphs are molecules, where nodes represent atoms and edges represent chemical bonds, and the graph classification task is to classify the molecules.
In the REDDIT dataset, a graph denotes a discussion thread, where nodes correspond to users, two of which are connected by an edge if one responded to a comment of the other.
The graph classification task is to distinguish whether the subreddits is discussion-based or question/answer-based (REDDIT-BINARY), or predict the subreddit (REDDIT-MULTI-5K).
The IMDB dataset consists of ego-networks derived from actor collaborations, and the graph classification task is to predict the genre, \eg, Action or Romance.

\subsubsection{Implementation Details} 
In this task, the entire network is trained via Adam optimizer with a batch size of $64$, and the learning rate is set to $10^{-3}$.
For the encoder architecture, we follow the same encoder settings in the released code of InfoGraph \cite{sun2020infograph}, \ie, three Graph Isomorphism Network (GIN) layers \cite{xu2018powerful} with batch normalization.
We also use one linear layer to classify the transformation types.
We set the sampling rate $r=0.5$ for all datasets.

During the evaluation stage, the entire encoder will be frozen to extract node-level feature representations, which will go through a global add pooling layer to acquire global features.
We then use LIBSVM to classify these global features to classification scores.
We adopt the same procedure of previous works \cite{sun2020infograph} to make a fair comparison and use 10-fold cross validation accuracy to report the classification performance, and the experiments are repeated five times.

\subsubsection{Experimental Results} 
We take six graph kernel approaches for comparison: Random Walk (RW) \cite{gartner2003graph}, Shortest Path Kernel (SP) \cite{borgwardt2005shortest}, Graphlet Kernel (GK) \cite{shervashidze2009efficient}, Weisfeiler-Lehman Sub-tree Kernel (WL) \cite{shervashidze2011weisfeiler}, Deep Graph Kernels (DGK) \cite{yanardag2015deep}, and Multi-Scale Laplacian Kernel (MLG) \cite{kondor2016multiscale}.
Aside from graph kernel methods, we also compare with three unsupervised graph-level representation learning methods: node2vec \cite{grover2016node2vec}, sub2vec \cite{adhikari2018sub2vec}, and graph2vec \cite{narayanan2017graph2vec}, and one contrastive learning method: InfoGraph \cite{sun2020infograph}.
The experimental results of unsupervised graph classification are preseted in Tab.~\ref{tab:results_gc}.
The proposed method outperforms all unsupervised baseline methods on the first five datasets, and achieves comparable results on the other dataset. 
Also, the proposed approach reaches the performance of supervised methods at times, thus validating the superiority of the {proposed method}.

\begin{table*}[t]
\centering
\caption{Link prediction results (with standard deviation) in percentage on three datasets.}
\label{tab:link_pred}
\begin{tabular}{l|cccccc}
\hline
\multicolumn{1}{c|}{\multirow{2}{*}{\textbf{Method}}} & \multicolumn{2}{c}{\textbf{Cora}} & \multicolumn{2}{c}{\textbf{Citeseer}} & \multicolumn{2}{c}{\textbf{Pubmed}} \\
\multicolumn{1}{c|}{} & \textbf{AUC} & \textbf{AP} & \textbf{AUC} & \textbf{AP} & \textbf{AUC} & \textbf{AP} \\ \hline
Spectral Clustering \cite{tang2011leveraging} & $84.6 \pm 0.01$ & $88.5 \pm 0.00$ & $80.5 \pm 0.01$ & $85.0 \pm 0.01$ & $84.2 \pm 0.02$ & $87.7 \pm 0.01$ \\
DeepWalk \cite{perozzi2014deepwalk} & $83.1 \pm 0.01$ & $85.0 \pm 0.00$ & $80.5 \pm 0.02$ & $83.6 \pm 0.01$ & $84.2 \pm 0.00$ & $84.1 \pm 0.00$ \\
GAE \cite{kipf2016variational} & $91.0 \pm 0.02$ & $92.0 \pm 0.03$ & $89.5 \pm 0.04$ & $89.9 \pm 0.05$ & $96.4 \pm 0.00$ & $96.5 \pm 0.00$ \\
VGAE \cite{kipf2016variational} & $91.4 \pm 0.01$ & $92.6 \pm 0.01$ & $90.8 \pm 0.02$ & $92.0 \pm 0.02$ & $94.4 \pm 0.02$ & $94.7 \pm 0.02$ \\
CensNet-VAE \cite{jiang2020co} & $91.7 \pm 0.02$ & $92.6 \pm 0.01$ & $90.6 \pm 0.01$ & $91.6 \pm 0.01$ & $95.5 \pm 0.03$ & $95.9 \pm 0.02$ \\ \hline
\textbf{Ours} & $\mathbf{93.4 \pm 1.15}$ & $\mathbf{92.7 \pm 1.16}$ & $\mathbf{92.7 \pm 0.50}$ & $\mathbf{91.9 \pm 0.74}$ & $\mathbf{96.8 \pm 0.70}$ & $\mathbf{96.8 \pm 0.70}$ \\ \hline
\end{tabular}
\end{table*}

\subsection{Link Prediction on Static Graphs}

\subsubsection{Implementation Details}

Also, we evaluate our model on link prediction over the three citation network datasets Cora, Citeseer, and Pubmed.
We follow the same experimental settings in \cite{kipf2016variational}, where $85\%$ of citation links are used for training, and $10\%$ and $5\%$ of citation links are for testing and validating respectively.
We employ the same number of randomly sampled pairs of disconnected nodes for testing.

We deploy two GCN \cite{kipf2017semi} layers with $32$ hidden channels and $16$ output channels as the encoder as in \cite{kipf2016variational}.
The ReLU activation function is employed between the two GCN layers.
Subsequent to the encoder, we also use one linear layer to classify the transformation types.

In this experiment, we demonstrate the benefits of pre-training with our method before training on link prediction.
During the training stage, we first train our model by minimizing Eq.~(\ref{eq:loss}) on the training set.
After that, we replace the transformation decoder with an inner product decoder to acquire the reconstructed adjacency matrix $\widehat{\A}$, \ie,
\begin{equation}
  \widehat{\A}=\mathrm{sigmoid}\left(\H\H^{\top}\right), \; \text{where} \; \H=E(\X,\A),
\end{equation}
where $\A$ is the original adjacency matrix, and  $\mathrm{sigmoid}(\cdot)$ is an activation function.
The encoder is thus fine-tuned by minimizing the reconstruction error between $\A$ and $\widehat{\A}$.
During the evaluation stage, we predict the edges in the testing set from the feature representations $\H$, and report \textit{area under the ROC curve} (AUC) and \textit{average precision} (AP) scores for each dataset.

\subsubsection{Experimental Results}

We take five approaches for comparison: Spectral Clustering \cite{tang2011leveraging}, DeepWalk \cite{perozzi2014deepwalk}, GAE \cite{kipf2016variational}, VGAE \cite{kipf2016variational}, and CensNet-VAE \cite{jiang2020co}.
We present the mean AUC and AP (with standard deviation) after $10$ runs of training with random initialization on fixed dataset splits.
As reported in Tab.~\ref{tab:link_pred}, the proposed method outperforms all AE-based methods on the three datasets.
This is because our method makes better use of the topology information of graphs by decoding the topology perturbation, thereby revealing not only static visual structures but also how they would change by applying different topology transformations.
This again validates the effectiveness of our method.

\begin{table*}[t]
\scriptsize
\centering
\caption{Link prediction results (with standard deviation) in percentage on three temporal graph datasets.}
\label{tab:dynamic_lp}
\begin{tabular}{l|cccccc}
\hline
\multicolumn{1}{c|}{\multirow{2}{*}{\textbf{Method}}} & \multicolumn{2}{c}{\textbf{Sparrow Social}} & \multicolumn{2}{c}{\textbf{Ant Colony}} & \multicolumn{2}{c}{\textbf{Wildbird Network}} \\
\multicolumn{1}{c|}{} & \textbf{AUC} & \textbf{AP} & \textbf{AUC} & \textbf{AP} & \textbf{AUC} & \textbf{AP} \\ \hline
GAE & \multicolumn{1}{l}{$56.5 \pm 0.54$} & \multicolumn{1}{l}{$52.1 \pm 0.26$} & \multicolumn{1}{l}{$59.1 \pm 5.87$} & \multicolumn{1}{l}{$51.9 \pm 3.27$} & \multicolumn{1}{l}{$52.8 \pm 0.30$} & \multicolumn{1}{l}{$47.9 \pm 0.33$} \\
VGAE & $55.0 \pm 0.17$ & $52.1 \pm 0.26$ & $66.0 \pm 0.29$ & $56.0 \pm 0.44$ & $52.3 \pm 0.25$ & $47.5 \pm 0.38$ \\
\textbf{Ours} & $\mathbf{67.7 \pm 4.19}$ & $\mathbf{59.6 \pm 2.51}$ & $\mathbf{70.0 \pm 0.30}$ & $\mathbf{58.5 \pm 0.30}$ & $\mathbf{60.8 \pm 4.19}$ & $\mathbf{54.9 \pm 2.68}$ \\ \hline
\end{tabular}
\end{table*}

\subsection{Link Prediction on Temporal Graphs}

\subsubsection{Implementation Details}

In order to validate the intuition of the benefit of our model for graph data changing over time, we further perform link prediction over temporal graph datasets.
Since most existing methods focus on static graphs, few methods have been proposed to study link prediction on dynamic graphs.
For example, DeepWalk \cite{perozzi2014deepwalk} generates node representations by randomly walking on static graphs; graph auto-encoders \cite{kipf2016variational} learn feature representations of graphs by decoding the input static graphs.
Hence, we propose an experimental setup to make graph auto-encoder methods adaptable to link prediction on temporal graphs.

Considering two consecutive graphs $\mathcal{G}_{t-1}=\{\mathcal{V}_{t-1},\mathcal{E}_{t-1},\A_{t-1}\}$ and $\mathcal{G}_{t}=\{\mathcal{V}_{t},\mathcal{E}_{t},\A_{t}\}$ from a temporal graph sequence $\{\mathcal{G}_{i}\}_{i=1}^{T}$ with $T$ graphs, our goal is to predict the edges in $\mathcal{G}_t$ given $\mathcal{G}_{t-1}$.
For the graph auto-encoder method, we feed the adjacency matrix $\A_{t-1}$ and its corresponding node features $\X_{t-1}$ to the encoder $E(\cdot)$ to learn the feature representations $\H_{t-1}$ at time $t-1$, and reconstruct the adjacency $\widehat{\A}_{t}$ at time $t$ through an inner product decoder, \ie,
\begin{equation}
\begin{split}
  &\widehat{\A}_{t}=\mathrm{sigmoid}\left(\H_{t-1}\H_{t-1}^{\top}\right), \\
  \text{where} \; &\H_{t-1}=E(\X_{t-1},\A_{t-1}),
\end{split}
\end{equation}
where $\mathrm{sigmoid}(\cdot)$ is an activation function.
For our model, we treat $\A_{t-1}$ and $\A_{t}$ as the original and transformed graphs, respectively.
The topology transformation is acquired by simply calculating $\dA=\A_{t}-\A_{t-1}$.
Since these datasets have no initial node features, we set all node features as $1$ to make our model mainly learn feature representations from graph structures, \ie, $\X_{t-1}=\X_{t}=\mathbf{1}_{N\times 1}$.
We thus feed the original and transformed graphs $\{\A_{t-1},\A_{t}\}$ to estimate the topology transformation $\dA$.
After the training of the graph auto-encoder model and our model, we feed the adjacency matrix $\A_{t-1}$ to the encoder with frozen weights to acquire the feature representation $\H_{t-1}$, and predict the edges in $\A_{t}$ through an inner product decoder.

In this experiment, we evaluate our model and two baseline methods GAE and VGAE \cite{kipf2016variational} on link prediction over three real-world temporal graph datasets \cite{rossi2015network}: Sparrow Social, Ant Colony, and Wildbird Network.
We deploy two GCN \cite{kipf2017semi} layers with $32$ hidden channels and $256$ output channels as the encoder.
Subsequent to the encoder, we also use one linear layer to classify the transformation types.
During the evaluation stage, we predict the edges in the testing set from the feature representations $\H_{t-1}$, and report \textit{area under the ROC curve} (AUC) and \textit{average precision} (AP) scores for each dataset.

\subsubsection{Experimental Results}
We present the mean AUC and AP (with standard deviation) after $10$ runs of training with random initialization on fixed dataset splits.
As reported in Tab.~\ref{tab:dynamic_lp}, the proposed method outperforms the two baseline methods on the three datasets. This verifies the effectiveness of our method on temporal graphs.

\begin{figure*}[t]
  \centering
  \subfigure[Node features $\H_{t-1}$ before transformation]{
  \includegraphics[width=0.4\textwidth]{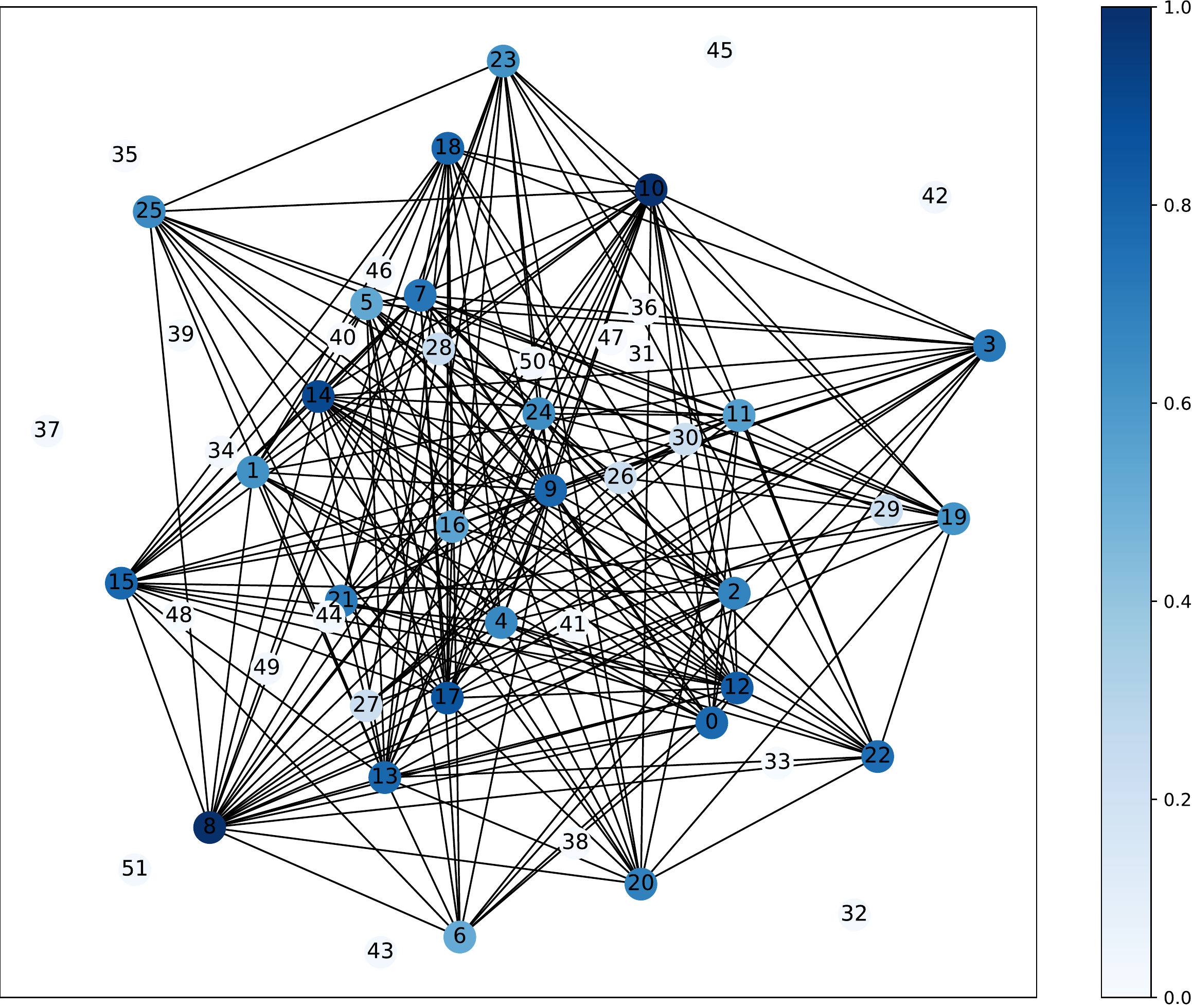}
  \label{subfig:origin_feats}
  }
  \subfigure[Node features $\H_{t}$ after transformation]{
  \includegraphics[width=0.4\textwidth]{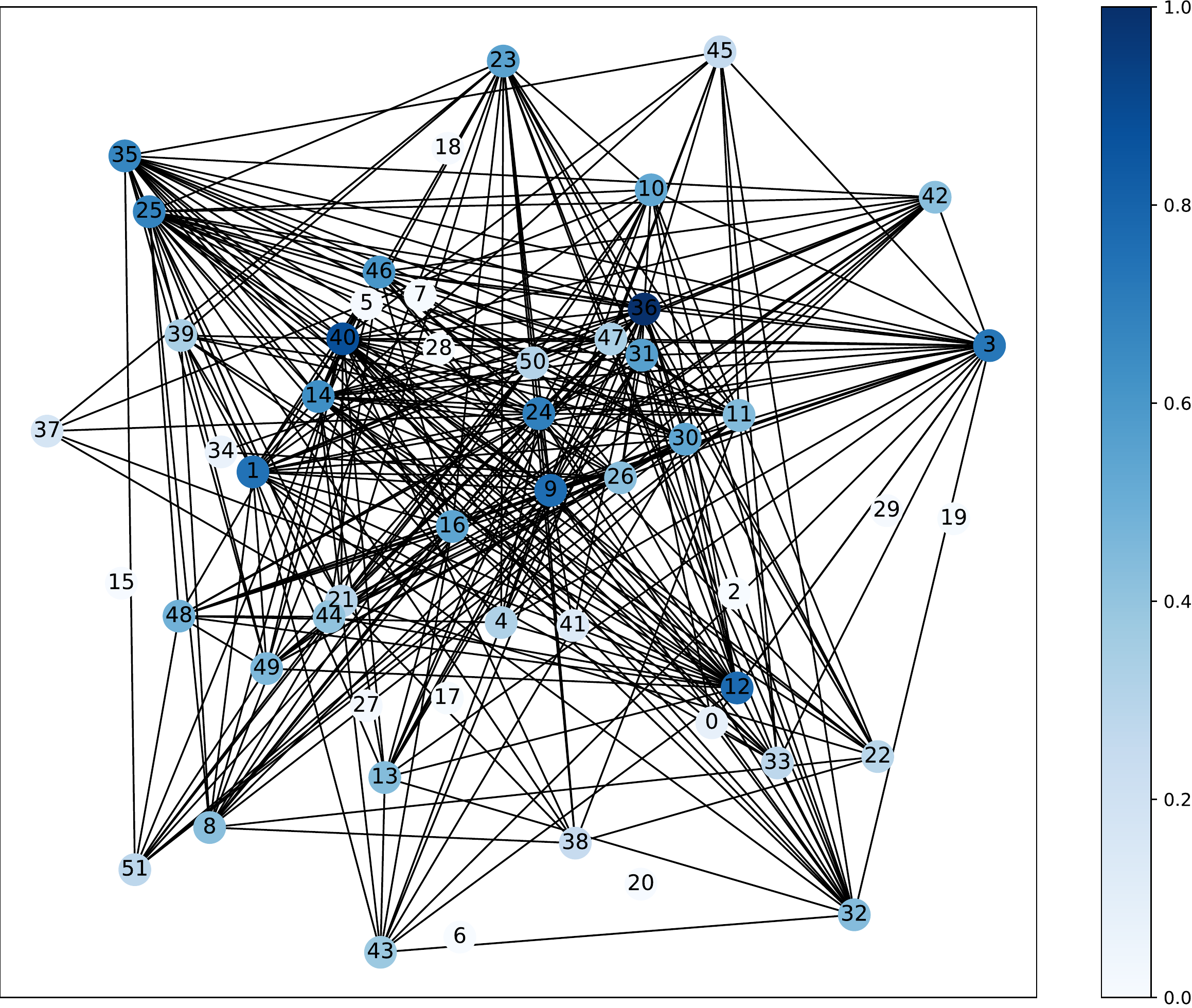}
  \label{subfig:transformed_feats}
  }
  \subfigure[Node features $\widehat{\dH}$ of estimated transformation $\widehat{\dA}$]{
  \includegraphics[width=0.4\textwidth]{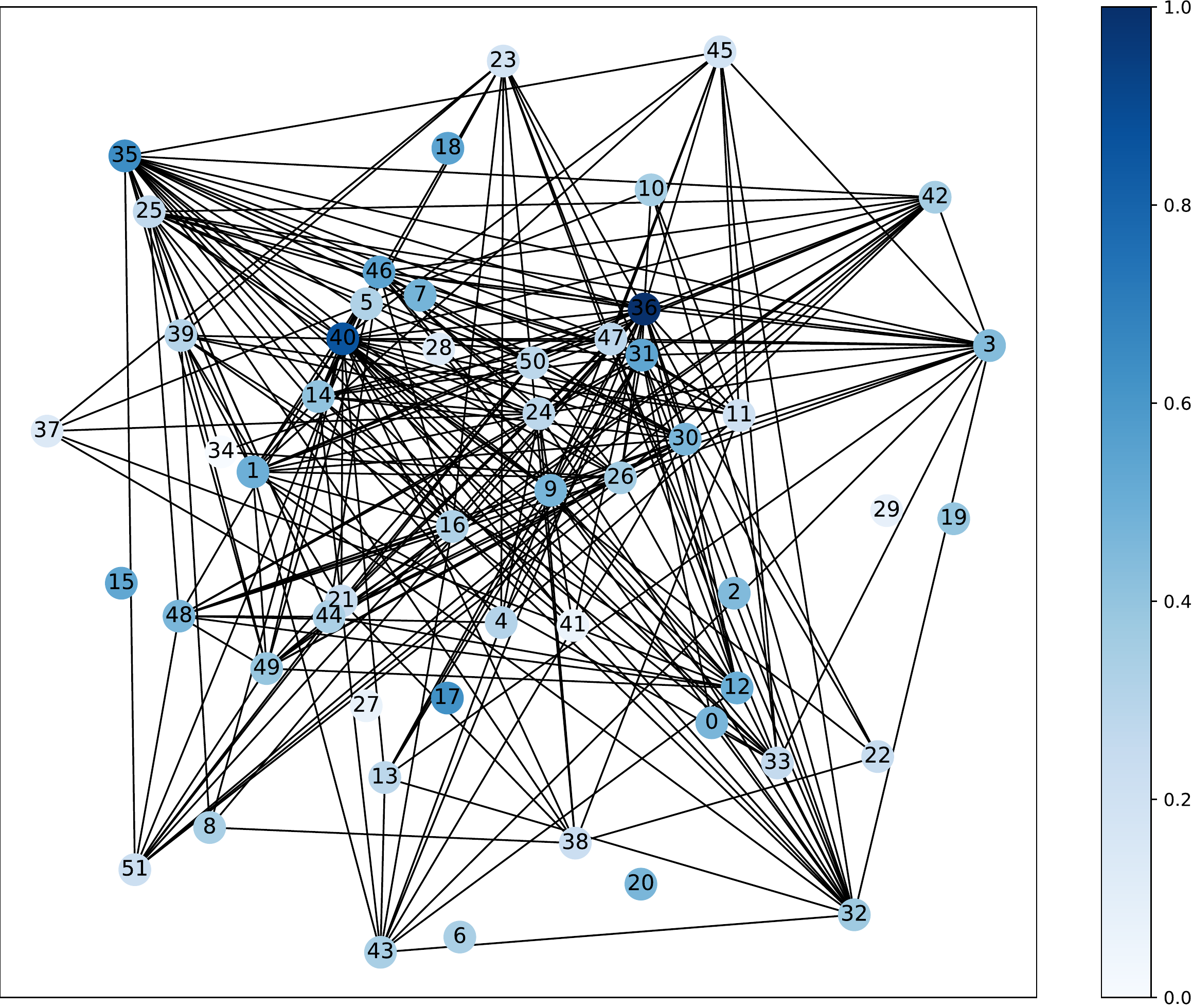}
  \label{subfig:estimated_delta_feats}
  }
  \subfigure[Node features of $\H_{t-1}+\widehat{\dH}$]{
  \includegraphics[width=0.4\textwidth]{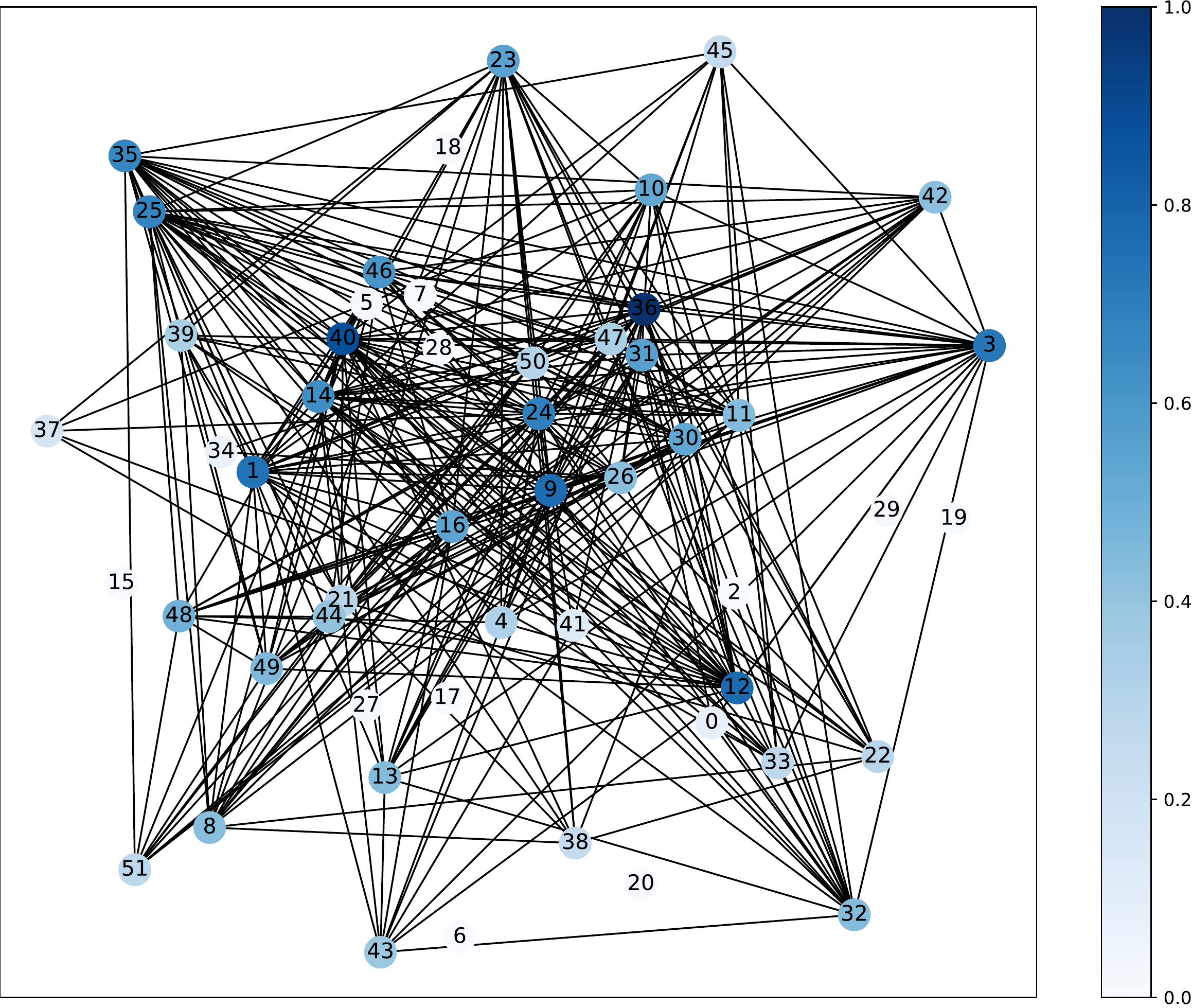}
  \label{subfig:origin_estimated_delta_feats}
  }
  \caption{\textbf{Visualization of node features on the real-scenario temporal graph dataset Sparrow Social \cite{rossi2015network}.} a) The node features learned from the original graph; b) The node features learned from the transformed graph; c) The changed node features learned from the estimated topology transformation; d) The node features of $\H_{t-1}+\widehat{\dH}$. Note that we only visualize the added edges in $\widehat{\dA}$ in (c) for easy observation, while the unchanged and removed edges are ignored. }
  \label{fig:vis_ter}
\end{figure*}

\subsection{Visualization of the TopoTER Learning}

We visualize the node features $\{\H,\tH\}$ of the original graph $\A$ and transformed graph $\tA$ to observe whether the learned feature representations meet the definition of the proposed topology transformation equivariance in Eq.~(\ref{eq:ter}).
If our proposed model learns topology transformation equivariant representations, then the feature representation $\widehat{\dH}=\tD^{-\frac{1}{2}}\widehat{\dA}\tD^{-\frac{1}{2}}\X\W$ based on the estimated $\widehat{\dA}$ by our model satisfies $\H+\widehat{\dH}\approx\tH$ according to Eq.~(\ref{eq:after_transform}).

In the experiment, we adopt the real-scenario temporal graph dataset Sparrow Social \cite{rossi2015network} to evaluate our model.
Since the dataset only contains graphs at two timestamps, we treat the first graph as the original graph, denoted as $\A=\A_{t-1}$, and the second graph as the transformed graph over the original graph, denoted as $\tA=\A_{t}$.
We use a one-hot encoding of node ID as the initial node features.
We then feed the original and transformed graphs and their corresponding node features into the proposed model to train our TopoTER for the estimation of the topology transformation $\widehat{\dA}$.
Having trained our model, we feed both the original and transformed graphs with the same node features to the model to acquire the feature representations $\H=\H_{t-1}$ and $\tH=\H_{t}$, respectively.
We further feed the estimated topology transformation $\widehat{\dA}$ and the node features $\X$ to our model to infer the changed node features $\widehat{\dH}$.

Fig.~\ref{fig:vis_ter} presents the heatmaps of the leaned node features.
On the one hand, Fig.~\ref{subfig:origin_feats} and Fig.~\ref{subfig:transformed_feats} demonstrate that the difference between the node features $\H_{t-1}$ of the original graph and $\H_{t}$ of the transformed graph is dominant due to the transformation. 
On the other hand, Fig.~\ref{subfig:transformed_feats} and Fig.~\ref{subfig:origin_estimated_delta_feats} show that the difference between the node features $\H_{t}$ and $\H_{t-1}+\widehat{\dH}$ is trivial, which validates $\H_{t-1}+\widehat{\dH}\approx\H_{t}$ discussed above.
This verifies that our model learns topology transformation equivariant representations.



\section{Conclusion}
\label{sec:conclusion}

We propose a self-supervised paradigm of Topology Transformation Equivariant Representation for graph representation learning. 
By maximizing the mutual information between topology transformations and feature representations before and after transformations, the proposed method enforces the encoder to learn intrinsic graph feature representations that contain sufficient information about structures under applied topology transformations. 
We apply our model to node classification, graph classification and link prediction tasks, and results demonstrate that the proposed method outperforms state-of-the-art unsupervised approaches and reaches the performance of supervised methods at times.
We believe this model will have impact on applications such as social analysis, molecule property prediction, as well as applications in 3D computer vision.



\begin{thebibliography}{10}
  \providecommand{\url}[1]{#1}
  \csname url@samestyle\endcsname
  \providecommand{\newblock}{\relax}
  \providecommand{\bibinfo}[2]{#2}
  \providecommand{\BIBentrySTDinterwordspacing}{\spaceskip=0pt\relax}
  \providecommand{\BIBentryALTinterwordstretchfactor}{4}
  \providecommand{\BIBentryALTinterwordspacing}{\spaceskip=\fontdimen2\font plus
  \BIBentryALTinterwordstretchfactor\fontdimen3\font minus
    \fontdimen4\font\relax}
  \providecommand{\BIBforeignlanguage}[2]{{%
  \expandafter\ifx\csname l@#1\endcsname\relax
  \typeout{** WARNING: IEEEtran.bst: No hyphenation pattern has been}%
  \typeout{** loaded for the language `#1'. Using the pattern for}%
  \typeout{** the default language instead.}%
  \else
  \language=\csname l@#1\endcsname
  \fi
  #2}}
  \providecommand{\BIBdecl}{\relax}
  \BIBdecl
  
  \bibitem{bronstein2017geometric}
  M.~M. Bronstein, J.~Bruna, Y.~LeCun, A.~Szlam, and P.~Vandergheynst,
    ``Geometric deep learning: going beyond euclidean data,'' \emph{IEEE Signal
    Processing Magazine}, vol.~34, no.~4, pp. 18--42, 2017.
  
  \bibitem{kipf2017semi}
  T.~N. Kipf and M.~Welling, ``Semi-supervised classification with graph
    convolutional networks,'' in \emph{International Conference on Learning
    Representations (ICLR)}, 2017.
  
  \bibitem{velivckovic2018graph}
  P.~Veli{\v{c}}kovi{\'c}, G.~Cucurull, A.~Casanova, A.~Romero, P.~Lio, and
    Y.~Bengio, ``Graph attention networks,'' in \emph{International Conference on
    Learning Representations (ICLR)}, 2018.
  
  \bibitem{xu2019graph}
  B.~Xu, H.~Shen, Q.~Cao, Y.~Qiu, and X.~Cheng, ``Graph wavelet neural network,''
    in \emph{International Conference on Learning Representations (ICLR)}, 2019.
  
  \bibitem{xu2018powerful}
  K.~Xu, W.~Hu, J.~Leskovec, and S.~Jegelka, ``How powerful are graph neural
    networks?'' in \emph{International Conference on Learning Representations
    (ICLR)}, 2019.
  
  \bibitem{kolesnikov2019revisiting}
  A.~Kolesnikov, X.~Zhai, and L.~Beyer, ``Revisiting self-supervised visual
    representation learning,'' in \emph{International Conference on Computer
    Vision and Pattern Recognition (CVPR)}, 2019, pp. 1920--1929.
  
  \bibitem{liu2020self}
  X.~Liu, F.~Zhang, Z.~Hou, Z.~Wang, L.~Mian, J.~Zhang, and J.~Tang,
    ``Self-supervised learning: Generative or contrastive,'' \emph{arXiv}, pp.
    arXiv--2006, 2020.
  
  \bibitem{van2016pixel}
  A.~Van~Oord, N.~Kalchbrenner, and K.~Kavukcuoglu, ``Pixel recurrent neural
    networks,'' in \emph{International Conference on Machine Learning
    (ICML)}.\hskip 1em plus 0.5em minus 0.4em\relax PMLR, 2016, pp. 1747--1756.
  
  \bibitem{oord16conditional}
  A.~van~den Oord, N.~Kalchbrenner, L.~Espeholt, K.~Kavukcuoglu, O.~Vinyals, and
    A.~Graves, ``Conditional image generation with pixelcnn decoders,'' in
    \emph{Neural Information Processing Systems (NIPS)}, 2016, pp. 4790--4798.
  
  \bibitem{you2018graphrnn}
  J.~You, R.~Ying, X.~Ren, W.~L. Hamilton, and J.~Leskovec, ``Graphrnn:
    Generating realistic graphs with deep auto-regressive models,'' in
    \emph{International Conference on Machine Learning (ICML)}, 2018, pp.
    5694--5703.
  
  \bibitem{dinh2014nice}
  L.~Dinh, D.~Krueger, and Y.~Bengio, ``Nice: Non-linear independent components
    estimation,'' \emph{arXiv preprint arXiv:1410.8516}, 2014.
  
  \bibitem{dinh2016density}
  L.~Dinh, J.~Sohl-Dickstein, and S.~Bengio, ``Density estimation using real
    nvp,'' \emph{arXiv preprint arXiv:1605.08803}, 2016.
  
  \bibitem{ballard1987modular}
  D.~H. Ballard, ``Modular learning in neural networks.'' in \emph{AAAI
    Conference on Artificial Intelligence (AAAI)}, 1987, pp. 279--284.
  
  \bibitem{kingma13auto}
  D.~P. Kingma and M.~Welling, ``Auto-encoding variational bayes,'' in
    \emph{International Conference on Learning Representations (ICLR)}, 2014.
  
  \bibitem{kipf2016variational}
  T.~N. Kipf and M.~Welling, ``Variational graph auto-encoders,'' in \emph{Neural
    Information Processing Systems (NIPS) Workshop on Bayesian Deep Learning},
    2016.
  
  \bibitem{goodfellow2014generative}
  I.~Goodfellow, J.~Pouget-Abadie, M.~Mirza, B.~Xu, D.~Warde-Farley, S.~Ozair,
    A.~Courville, and Y.~Bengio, ``Generative adversarial nets,'' in \emph{Neural
    Information Processing Systems (NIPS)}, 2014, pp. 2672--2680.
  
  \bibitem{wang2018graphgan}
  H.~Wang, J.~Wang, J.~Wang, M.~Zhao, W.~Zhang, F.~Zhang, X.~Xie, and M.~Guo,
    ``Graphgan: Graph representation learning with generative adversarial nets,''
    in \emph{AAAI Conference on Artificial Intelligence (AAAI)}, vol.~32, no.~1,
    2018.
  
  \bibitem{hjelm2018learning}
  R.~D. Hjelm, A.~Fedorov, S.~Lavoie-Marchildon, K.~Grewal, P.~Bachman,
    A.~Trischler, and Y.~Bengio, ``Learning deep representations by mutual
    information estimation and maximization,'' in \emph{International Conference
    on Learning Representations (ICLR)}, 2018.
  
  \bibitem{velickovic2019deep}
  P.~Velickovic, W.~Fedus, W.~L. Hamilton, P.~Li{\`o}, Y.~Bengio, and R.~D.
    Hjelm, ``Deep graph infomax,'' in \emph{International Conference on Learning
    Representations (ICLR)}, 2019.
  
  \bibitem{peng2020graph}
  Z.~Peng, W.~Huang, M.~Luo, Q.~Zheng, Y.~Rong, T.~Xu, and J.~Huang, ``Graph
    representation learning via graphical mutual information maximization,'' in
    \emph{WWW}, 2020, pp. 259--270.
  
  \bibitem{sun2020infograph}
  F.-Y. Sun, J.~Hoffmann, V.~Verma, and J.~Tang, ``Infograph: Unsupervised and
    semi-supervised graph-level representation learning via mutual information
    maximization,'' in \emph{International Conference on Learning Representations
    (ICLR)}, 2020.
  
  \bibitem{qi2019learning}
  G.-J. Qi, L.~Zhang, and X.~Wang, ``Learning generalized transformation
    equivariant representations via autoencoding transformations,'' \emph{arXiv
    preprint arXiv:1906.08628}, 2019.
  
  \bibitem{hinton2011transforming}
  G.~E. Hinton, A.~Krizhevsky, and S.~D. Wang, ``Transforming auto-encoders,'' in
    \emph{International Conference on Artificial Neural Networks (ICANN)}.\hskip
    1em plus 0.5em minus 0.4em\relax Springer, 2011, pp. 44--51.
  
  \bibitem{kivinen2011transformation}
  J.~J. Kivinen and C.~K. Williams, ``Transformation equivariant boltzmann
    machines,'' in \emph{International Conference on Artificial Neural Networks
    (ICANN)}.\hskip 1em plus 0.5em minus 0.4em\relax Springer, 2011, pp. 1--9.
  
  \bibitem{sohn2012learning}
  K.~Sohn and H.~Lee, ``Learning invariant representations with local
    transformations,'' in \emph{International Conference on Machine Learning
    (ICML)}, 2012, pp. 1339--1346.
  
  \bibitem{schmidt2012learning}
  U.~Schmidt and S.~Roth, ``Learning rotation-aware features: From invariant
    priors to equivariant descriptors,'' in \emph{International Conference on
    Computer Vision and Pattern Recognition (CVPR)}.\hskip 1em plus 0.5em minus
    0.4em\relax IEEE, 2012, pp. 2050--2057.
  
  \bibitem{skibbe2013spherical}
  H.~Skibbe, ``Spherical tensor algebra for biomedical image analysis,'' Ph.D.
    dissertation, Verlag nicht ermittelbar, 2013.
  
  \bibitem{gens2014deep}
  R.~Gens and P.~M. Domingos, ``Deep symmetry networks,'' in \emph{Neural
    Information Processing Systems (NIPS)}, 2014, pp. 2537--2545.
  
  \bibitem{lenc2015understanding}
  K.~Lenc and A.~Vedaldi, ``Understanding image representations by measuring
    their equivariance and equivalence,'' in \emph{International Conference on
    Computer Vision and Pattern Recognition (CVPR)}, 2015, pp. 991--999.
  
  \bibitem{dieleman2015rotation}
  S.~Dieleman, K.~W. Willett, and J.~Dambre, ``Rotation-invariant convolutional
    neural networks for galaxy morphology prediction,'' \emph{Monthly Notices of
    the Royal Astronomical Society}, vol. 450, no.~2, pp. 1441--1459, 2015.
  
  \bibitem{dieleman2016exploiting}
  S.~Dieleman, J.~De~Fauw, and K.~Kavukcuoglu, ``Exploiting cyclic symmetry in
    convolutional neural networks,'' in \emph{International Conference on Machine
    Learning (ICML)}, 2016, pp. 1889--1898.
  
  \bibitem{zhang2019aet}
  L.~Zhang, G.-J. Qi, L.~Wang, and J.~Luo, ``{AET} vs. {AED}: Unsupervised
    representation learning by auto-encoding transformations rather than data,''
    in \emph{International Conference on Computer Vision and Pattern Recognition
    (CVPR)}, 2019, pp. 2547--2555.
  
  \bibitem{qi2019avt}
  G.-J. Qi, L.~Zhang, C.~W. Chen, and Q.~Tian, ``{AVT}: Unsupervised learning of
    transformation equivariant representations by autoencoding variational
    transformations,'' in \emph{International Conference on Computer Vision
    (ICCV)}, 2019.
  
  \bibitem{wang2020transformation}
  J.~Wang, W.~Zhou, G.-J. Qi, Z.~Fu, Q.~Tian, and H.~Li, ``Transformation gan for
    unsupervised image synthesis and representation learning,'' in
    \emph{International Conference on Computer Vision and Pattern Recognition
    (CVPR)}, 2020, pp. 472--481.
  
  \bibitem{gao2020graphter}
  X.~Gao, W.~Hu, and G.-J. Qi, ``Graph{TER}: Unsupervised learning of graph
    transformation equivariant representations via auto-encoding node-wise
    transformations,'' in \emph{International Conference on Computer Vision and
    Pattern Recognition (CVPR)}, 2020.
  
  \bibitem{cao2016deep}
  S.~Cao, W.~Lu, and Q.~Xu, ``Deep neural networks for learning graph
    representations.'' in \emph{AAAI Conference on Artificial Intelligence
    (AAAI)}, vol.~16, 2016, pp. 1145--1152.
  
  \bibitem{vincent2008extracting}
  P.~Vincent, H.~Larochelle, Y.~Bengio, and P.-A. Manzagol, ``Extracting and
    composing robust features with denoising autoencoders,'' in
    \emph{International Conference on Machine Learning (ICML)}, 2008, pp.
    1096--1103.
  
  \bibitem{wang2016structural}
  D.~Wang, P.~Cui, and W.~Zhu, ``Structural deep network embedding,'' in
    \emph{ACM SIGKDD Conference on Knowledge Discovery and Data Mining (KDD)},
    2016, pp. 1225--1234.
  
  \bibitem{qu2019gmnn}
  M.~Qu, Y.~Bengio, and J.~Tang, ``{GMNN}: Graph markov neural networks,'' in
    \emph{International Conference on Machine Learning (ICML)}.\hskip 1em plus
    0.5em minus 0.4em\relax PMLR, 2019, pp. 5241--5250.
  
  \bibitem{jiang2020co}
  X.~Jiang, R.~Zhu, S.~Li, and P.~Ji, ``Co-embedding of nodes and edges with
    graph neural networks,'' \emph{IEEE Transactions on Pattern Analysis and
    Machine Intelligence (TPAMI)}, 2020.
  
  \bibitem{wu2020comprehensive}
  Z.~Wu, S.~Pan, F.~Chen, G.~Long, C.~Zhang, and S.~Y. Philip, ``A comprehensive
    survey on graph neural networks,'' \emph{IEEE Transactions on Neural Networks
    and Learning Systems}, 2020.
  
  \bibitem{li2018learning}
  Y.~Li, O.~Vinyals, C.~Dyer, R.~Pascanu, and P.~Battaglia, ``Learning deep
    generative models of graphs,'' \emph{arXiv preprint arXiv:1803.03324}, 2018.
  
  \bibitem{de2018molgan}
  N.~De~Cao and T.~Kipf, ``{MolGAN}: An implicit generative model for small
    molecular graphs,'' in \emph{International Conference on Machine Learning
    Workshop on Theoretical Foundations and Applications of Deep Generative
    Models}, 2018.
  
  \bibitem{schlichtkrull2018modeling}
  M.~Schlichtkrull, T.~N. Kipf, P.~Bloem, R.~Van Den~Berg, I.~Titov, and
    M.~Welling, ``Modeling relational data with graph convolutional networks,''
    in \emph{European Semantic Web Conference}.\hskip 1em plus 0.5em minus
    0.4em\relax Springer, 2018, pp. 593--607.
  
  \bibitem{gulrajani2017improved}
  I.~Gulrajani, F.~Ahmed, M.~Arjovsky, V.~Dumoulin, and A.~Courville, ``Improved
    training of wasserstein gans,'' in \emph{Neural Information Processing
    Systems (NIPS)}, 2017, pp. 5769--5779.
  
  \bibitem{bojchevski2018netgan}
  A.~Bojchevski, O.~Shchur, D.~Z{\"u}gner, and S.~G{\"u}nnemann, ``Netgan:
    Generating graphs via random walks,'' in \emph{International Conference on
    Machine Learning (ICML)}.\hskip 1em plus 0.5em minus 0.4em\relax PMLR, 2018,
    pp. 610--619.
  
  \bibitem{hochreiter1997long}
  S.~Hochreiter and J.~Schmidhuber, ``Long short-term memory,'' \emph{Neural
    Computation}, vol.~9, no.~8, pp. 1735--1780, 1997.
  
  \bibitem{arjovsky2017wasserstein}
  M.~Arjovsky, S.~Chintala, and L.~Bottou, ``Wasserstein {GAN},'' \emph{arXiv
    preprint arXiv:1701.07875}, 2017.
  
  \bibitem{bachman2019learning}
  P.~Bachman, R.~D. Hjelm, and W.~Buchwalter, ``Learning representations by
    maximizing mutual information across views,'' in \emph{Neural Information
    Processing Systems (NeurIPS)}, 2019, pp. 15\,535--15\,545.
  
  \bibitem{chen2020simple}
  T.~Chen, S.~Kornblith, M.~Norouzi, and G.~Hinton, ``A simple framework for
    contrastive learning of visual representations,'' in \emph{International
    Conference on Machine Learning}.\hskip 1em plus 0.5em minus 0.4em\relax PMLR,
    2020, pp. 1597--1607.
  
  \bibitem{he2020momentum}
  K.~He, H.~Fan, Y.~Wu, S.~Xie, and R.~Girshick, ``Momentum contrast for
    unsupervised visual representation learning,'' in \emph{Proceedings of the
    IEEE/CVF Conference on Computer Vision and Pattern Recognition (CVPR)}, 2020,
    pp. 9729--9738.
  
  \bibitem{hu2021adco}
  Q.~Hu, X.~Wang, W.~Hu, and G.-J. Qi, ``Adco: Adversarial contrast for efficient
    learning of unsupervised representations from self-trained negative
    adversaries,'' in \emph{Proceedings of the IEEE Conference on Computer Vision
    and Pattern Recognition (CVPR)}, 2021.
  
  \bibitem{caron2018deep}
  M.~Caron, P.~Bojanowski, A.~Joulin, and M.~Douze, ``Deep clustering for
    unsupervised learning of visual features,'' in \emph{European Conference on
    Computer Vision (ECCV)}, 2018, pp. 132--149.
  
  \bibitem{sun2020multi}
  K.~Sun, Z.~Lin, and Z.~Zhu, ``Multi-stage self-supervised learning for graph
    convolutional networks on graphs with few labeled nodes.'' in \emph{AAAI
    Conference on Artificial Intelligence (AAAI)}, 2020, pp. 5892--5899.
  
  \bibitem{qiu2020gcc}
  J.~Qiu, Q.~Chen, Y.~Dong, J.~Zhang, H.~Yang, M.~Ding, K.~Wang, and J.~Tang,
    ``Gcc: Graph contrastive coding for graph neural network pre-training,'' in
    \emph{ACM SIGKDD Conference on Knowledge Discovery and Data Mining (KDD)},
    2020, pp. 1150--1160.
  
  \bibitem{you2020graph}
  Y.~You, T.~Chen, Y.~Sui, T.~Chen, Z.~Wang, and Y.~Shen, ``Graph contrastive
    learning with augmentations,'' in \emph{Advances in Neural Information
    Processing Systems (NeurIPS)}, vol.~33, 2020, pp. 5812--5823.
  
  \bibitem{cohen2016group}
  T.~Cohen and M.~Welling, ``Group equivariant convolutional networks,'' in
    \emph{International Conference on Machine Learning (ICML)}, 2016, pp.
    2990--2999.
  
  \bibitem{lenssen2018group}
  J.~E. Lenssen, M.~Fey, and P.~Libuschewski, ``Group equivariant capsule
    networks,'' in \emph{Neural Information Processing Systems (NeurIPS)}, 2018,
    pp. 8844--8853.
  
  \bibitem{de2020gauge}
  P.~de~Haan, M.~Weiler, T.~Cohen, and M.~Welling, ``Gauge equivariant mesh cnns:
    Anisotropic convolutions on geometric graphs,'' in \emph{International
    Conference on Learning Representations (ICLR)}, 2021.
  
  \bibitem{fuchs2020se}
  F.~B. Fuchs, D.~E. Worrall, V.~Fischer, and M.~Welling, ``Se(3)-transformers:
    3d roto-translation equivariant attention networks,'' \emph{arXiv preprint
    arXiv:2006.10503}, 2020.
  
  \bibitem{de2020natural}
  P.~de~Haan, T.~Cohen, and M.~Welling, ``Natural graph networks,'' in
    \emph{Advances in Neural Information Processing Systems (NeurIPS)}, 2020.
  
  \bibitem{satorras2021n}
  V.~G. Satorras, E.~Hoogeboom, and M.~Welling, ``E(n) equivariant graph neural
    networks,'' \emph{arXiv preprint arXiv:2102.09844}, 2021.
  
  \bibitem{gao2021self}
  X.~Gao, W.~Hu, and G.-J. Qi, ``Self-supervised multi-view learning via
    auto-encoding 3d transformations,'' \emph{arXiv preprint arXiv:2103.00787},
    2021.
  
  \bibitem{monti2017geometric}
  F.~Monti, D.~Boscaini, J.~Masci, E.~Rodola, J.~Svoboda, and M.~M. Bronstein,
    ``Geometric deep learning on graphs and manifolds using mixture model cnns,''
    in \emph{International Conference on Computer Vision and Pattern Recognition
    (CVPR)}, 2017, pp. 5115--5124.
  
  \bibitem{wu2019simplifying}
  F.~Wu, T.~Zhang, A.~H.~d. Souza~Jr, C.~Fifty, T.~Yu, and K.~Q. Weinberger,
    ``Simplifying graph convolutional networks,'' in \emph{International
    Conference on Machine Learning (ICML)}, 2019, pp. 6861--6871.
  
  \bibitem{abu2019mixhop}
  S.~Abu-El-Haija, B.~Perozzi, A.~Kapoor, N.~Alipourfard, K.~Lerman,
    H.~Harutyunyan, G.~V. Steeg, and A.~Galstyan, ``Mixhop: Higher-order graph
    convolutional architectures via sparsified neighborhood mixing,'' in
    \emph{International Conference on Machine Learning (ICML)}, 2019, pp. 21--29.
  
  \bibitem{wijesinghe2019dfnets}
  W.~A.~S. Wijesinghe and Q.~Wang, ``Dfnets: Spectral cnns for graphs with
    feedback-looped filters,'' in \emph{Neural Information Processing Systems
    (NeurIPS)}, 2019, pp. 6009--6020.
  
  \bibitem{perozzi2014deepwalk}
  B.~Perozzi, R.~Al-Rfou, and S.~Skiena, ``Deepwalk: Online learning of social
    representations,'' in \emph{ACM SIGKDD Conference on Knowledge Discovery and
    Data Mining (KDD)}, 2014, pp. 701--710.
  
  \bibitem{sen2008collective}
  P.~Sen, G.~Namata, M.~Bilgic, L.~Getoor, B.~Galligher, and T.~Eliassi-Rad,
    ``Collective classification in network data,'' \emph{AI magazine}, vol.~29,
    no.~3, pp. 93--93, 2008.
  
  \bibitem{yanardag2015deep}
  P.~Yanardag and S.~Vishwanathan, ``Deep graph kernels,'' in \emph{ACM SIGKDD
    Conference on Knowledge Discovery and Data Mining (KDD)}, 2015, pp.
    1365--1374.
  
  \bibitem{gartner2003graph}
  T.~G{\"a}rtner, P.~Flach, and S.~Wrobel, ``On graph kernels: Hardness results
    and efficient alternatives,'' in \emph{Learning Theory and Kernel
    Machines}.\hskip 1em plus 0.5em minus 0.4em\relax Springer, 2003, pp.
    129--143.
  
  \bibitem{borgwardt2005shortest}
  K.~M. Borgwardt and H.-P. Kriegel, ``Shortest-path kernels on graphs,'' in
    \emph{IEEE International Conference on Data Mining (ICDM)}.\hskip 1em plus
    0.5em minus 0.4em\relax IEEE, 2005, pp. 8--pp.
  
  \bibitem{shervashidze2009efficient}
  N.~Shervashidze, S.~Vishwanathan, T.~Petri, K.~Mehlhorn, and K.~Borgwardt,
    ``Efficient graphlet kernels for large graph comparison,'' in
    \emph{Artificial Intelligence and Statistics}, 2009, pp. 488--495.
  
  \bibitem{shervashidze2011weisfeiler}
  N.~Shervashidze, P.~Schweitzer, E.~J. Van~Leeuwen, K.~Mehlhorn, and K.~M.
    Borgwardt, ``Weisfeiler-lehman graph kernels,'' \emph{Journal of Machine
    Learning Research (JMLR)}, vol.~12, no.~9, 2011.
  
  \bibitem{kondor2016multiscale}
  R.~Kondor and H.~Pan, ``The multiscale laplacian graph kernel,'' in
    \emph{Neural Information Processing Systems (NIPS)}, 2016, pp. 2990--2998.
  
  \bibitem{grover2016node2vec}
  A.~Grover and J.~Leskovec, ``node2vec: Scalable feature learning for
    networks,'' in \emph{ACM SIGKDD Conference on Knowledge Discovery and Data
    Mining (KDD)}, 2016, pp. 855--864.
  
  \bibitem{adhikari2018sub2vec}
  B.~Adhikari, Y.~Zhang, N.~Ramakrishnan, and B.~A. Prakash, ``Sub2vec: Feature
    learning for subgraphs,'' in \emph{Pacific-Asia Conference on Knowledge
    Discovery and Data Mining (PAKDD)}.\hskip 1em plus 0.5em minus 0.4em\relax
    Springer, 2018, pp. 170--182.
  
  \bibitem{narayanan2017graph2vec}
  A.~Narayanan, M.~Chandramohan, R.~Venkatesan, L.~Chen, Y.~Liu, and S.~Jaiswal,
    ``graph2vec: Learning distributed representations of graphs,'' in
    \emph{International Workshop on Mining and Learning with Graphs
    (MLGWorkshop)}, 2017.
  
  \bibitem{tang2011leveraging}
  L.~Tang and H.~Liu, ``Leveraging social media networks for classification,''
    \emph{Data Mining and Knowledge Discovery}, vol.~23, no.~3, pp. 447--478,
    2011.
  
  \bibitem{rossi2015network}
  R.~Rossi and N.~Ahmed, ``The network data repository with interactive graph
    analytics and visualization,'' in \emph{AAAI Conference on Artificial
    Intelligence (AAAI)}, 2015.
  
  \end{thebibliography}
\end{document}